\documentclass[10pt]{article} 

\usepackage[accepted]{rlj} 

\usepackage{amssymb}            
\usepackage{mathtools}          
\usepackage{mathrsfs}           
\usepackage{graphicx}           
\usepackage{subcaption}         
\usepackage[space]{grffile}     
\usepackage{url}                
\usepackage{lipsum}             

\usepackage{amsfonts}
\usepackage{amsthm}
\usepackage[capitalize,noabbrev]{cleveref}
\usepackage{tikz}
\usetikzlibrary{positioning}
\usepackage{bm}
\usepackage{gmverse}
\usepackage{booktabs} 

\usepackage{algorithm}
\usepackage{algpseudocode}

\newtheorem{proposition}{Proposition}[section]
\newtheorem{theorem}{Theorem}[section]
\newtheorem{example}{Example}[section]
\newtheorem{definition}{Definition}[section]
\newtheorem{remark}{Remark}[section]

\tikzstyle{vertex}=[draw,circle,minimum size=25pt,inner sep=0pt]
\tikzstyle{selected vertex} = [vertex, fill=red!24]
\tikzstyle{edge} = [draw,thick,->]
\tikzstyle{weight} = [font=\normalsize]

\usepackage{wrapfig}
\newcommand{\St}{{\mathcal S}}
\newcommand{\Ac}{{\mathcal A}}

\newcommand{\Md}{{\mathcal M}}

\newcommand{\R}{{\bm r}}
\newcommand{\J}{J} 
\newcommand{\w}{{\bm \omega}} 
\newcommand{\nO}{N} 
\newcommand{\swf}{\phi_{\w}} 

\usepackage{amsmath}
\DeclareMathOperator*{\argmax}{argmax}
\newcommand{\Expect}{\mathbb E}

\newcommand{\KL}{\mathrm{KL}}
\newcommand{\sD}{\mathcal{D}}
\newcommand{\tS}{\widetilde{\mathcal{S}}}
\newcommand{\tM}{\widetilde{\mathcal{M}}}

\title{Inference-Time Policy Alignment for Fair Reinforcement Learning}

\setrunningtitle{Policy Shaping for Fair RL}

\author{Umer Siddique, Peilang Li, Conor Wallace, Yongcan Cao}

\emails{\{muhammadumer.siddique,peilang.li,conor.wallace\}@my.utsa.edu, yongcan.cao@utsa.edu}

\affiliations{
\textbf{Department of Electrical and Computer Engineering, \\University of Texas at San Antonio}\\
}

\contribution{
    We formalize \emph{inference-time fairness alignment} in RL as a policy shaping problem, enabling a frozen reward-maximizing policy to be steered toward welfare-based fairness objectives at deployment without retraining.
    }
    {
    While prior work on welfare-based fairness in RL~\citep{SiddiqueWengZimmer20,fan2022welfare,YuSiddiqueWeng23} strictly requires training-time optimization, and inference-time alignment techniques~\citep{lu2023inference,mujtaba2025aligning} have largely focused on autoregressive generation, we bridge this gap and achieve inference-time fairness alignment in sequential decision-making without retraining the base policy.
    }

\contribution{
    We construct a welfare-augmented MDP that restores Markovianity for non-linear welfare functions and derive, in closed form, the shaped policy that maximizes a KL-regularized welfare surrogate, an exponential reweighting of the base policy which provides a local lower bound on the welfare improvement under the KL constraint.
    }
    {
    Although the KL-regularized policy improvement is a foundational result in RL~\citep{SchulmanLevineAbbeelJordanMoritz15}, its application to non-linear, trajectory-level welfare functions is previously unexplored. We provide a formal theoretical guarantee that the state augmentation forms a valid augmented MDP and show that welfare-based fairness can be optimized through this structure by introducing an augmented state that tracks cumulative rewards.
    }

\contribution{
    We propose QFair, a lightweight welfare critic trained offline on base-policy rollouts, which evaluates actions based on both the current state and the evolving cumulative reward vector.
    }
    {
    By proving a trajectory equivalence result between the original and augmented MDPs, we demonstrate that QFair can be learned from base-policy rollouts with light exploration, requiring no welfare-driven environment interaction. This avoids the computationally expensive or unsafe welfare-oriented training interactions required by standard multi-objective fair RL methods~\citep{SiddiqueWengZimmer20,zimmer2021learning,ju2023achieving,kim2025fairdice}.
    }

\keywords{Reinforcement learning, Inference-time alignment, Policy shaping, Fair optimization} 

\summary{Deep reinforcement learning (RL) agents achieve strong performance by optimizing scalar reward functions. However, once deployed, the policies of these RL agents are often rigid and costly to adapt to new performance criteria. For instance, an agent trained to maximize expected cumulative reward may not accommodate previously unknown stakeholder preferences. Existing approaches to achieve fairness, a type of preference, in RL typically assume that such preferences are known \textit{a priori} and require complete retraining of the policy under a fairness-oriented metric. Inspired by inference-time alignment in large language models, we investigate the problem of steering a pretrained RL policy toward welfare-based fairness objectives at inference time without updating the base policy's parameters. We formalize inference-time fairness alignment as a policy shaping problem and propose a multiplicative policy shaping framework that adjusts action probabilities using action-dependent welfare scores, thus requiring no modification to the base policy. Our framework is general and compatible with any deep RL agent. Through extensive experiments across multiple domains, we demonstrate that inference-time policy shaping substantially improves welfare-based fairness objectives while preserving core task performance.
}

\begin{document}

\maketitle  

\begin{abstract}
Deep reinforcement learning (RL) agents achieve strong performance by optimizing scalar reward functions. However, once deployed, the policies of these RL agents are often rigid and costly to adapt to new performance criteria. For instance, an agent trained to maximize expected cumulative reward may not accommodate previously unknown stakeholder preferences. Existing approaches to achieve fairness, a type of preference, in RL typically assume that such preferences are known \textit{a priori} and require complete retraining of the policy under a fairness-oriented metric. Inspired by inference-time alignment in large language models, we investigate the problem of steering a pretrained RL policy toward welfare-based fairness objectives at inference time without updating the base policy's parameters. We formalize inference-time fairness alignment as a policy shaping problem and propose a multiplicative policy shaping framework that adjusts action probabilities using action-dependent welfare scores, thus requiring no modification to the base policy. Our framework is general and compatible with any deep RL agent. Through extensive experiments across multiple domains, we demonstrate that inference-time policy shaping substantially improves welfare-based fairness objectives while preserving core task performance.
\end{abstract}

\section{Introduction}
\label{sec:intro}
Deep RL has demonstrated remarkable success in domains ranging from robotics~\citep{peters2003reinforcement} to game playing~\citep{silver2017mastering}, typically by optimizing for a single scalar reward function~\citep{SuttonBarto98}. 
These achievements are often built on fixed task definitions and carefully engineered training pipelines, where RL agents learn policies that maximize return by interacting with an environment.
However, such policies can be rigid and brittle at deployment and hence, incorporating new constraints, accommodating evolving stakeholder preferences, or handling out-of-distribution (OOD) shifts \citep{haider2023out,ajay2022distributionally} typically requires costly retraining or delicate fine-tuning, with no guarantee of robustness.
This becomes particularly challenging in socially impactful domains, where the objective extends beyond cumulative reward to include fairness, safety, and multi-stakeholder welfare. 
For instance, in autonomous driving and decision-making systems, efficiency may need to be traded off for risk sensitivity and safety. Similarly, in resource allocation and transportation, equitable outcomes across user groups may be required.
Crucially, such safety or fairness preferences can evolve depending on context, regulatory changes, or societal expectations.
Yet, in most deep RL methods, these requirements are addressed during training, and adapting to new preferences or new stakeholders often requires retraining. 
In practice, such retraining is often impractical due to the associated computational cost, time, and data requirements~\citep{mujtaba2025aligning}.
Consequently, there is a need for techniques that can adapt \emph{pretrained} RL policies to evolving fairness objectives without retraining the base policy. For instance,  when a single base policy serves many clients with heterogeneous fairness needs, or when regulatory changes demand compliance before retraining is feasible.

In contrast to the static nature of deployed RL agents, recent progress in large language models (LLMs) suggests that post-training alignment can be feasible~\citep{kumar2025llm}. 
These inference-time alignment techniques aim to steer frozen trained models toward new objectives without updating their parameters. Methods such as reward-guided decoding, representation steering, and lightweight policy adaptation have demonstrated that frozen models can be flexibly directed toward new objectives via auxiliary scoring or controlled decoding strategies~\citep{li2023inference,xu2024genarm,lin2025parm}. Other methods, including RL from human feedback (RLHF)~\citep{ouyang2022training, kaufmann2024survey}, direct preference optimization (DPO)~\citep{rafailov2023direct}, and activation steering~\citep{turner2023steering}, further illustrate how behavior can be reshaped at deployment time through preference models or inference-time control.
These techniques are effective mainly for autoregressive LLMs, where generation can be viewed as a short-horizon decision process with immediate feedback and where past decisions (generated tokens) are automatically carried forward in the model’s context. However, the applicability of this approach in RL is an open problem as the objective, such as fairness, is typically defined over long horizons (e.g., expectation over trajectories) rather than as instantaneous, per-step signals.
Moreover, standard deployed RL policies are Markovian, meaning that they only condition on the current state.
Unlike autoregressive generation, past actions do not necessarily remain available to the policy unless they are encoded into the state. 
Therefore, it remains unclear whether and how inference-time alignment can solve sequential decision-making problems.
This paper addresses precisely this question: 
\textit{\textbf{Given a fixed, pretrained RL policy, can we steer its long-term behavior toward objectives such as fairness at deployment time, while preserving its core performance?}} 

\begin{figure}[t]
    \centering
    \includegraphics[width=1.0\linewidth]{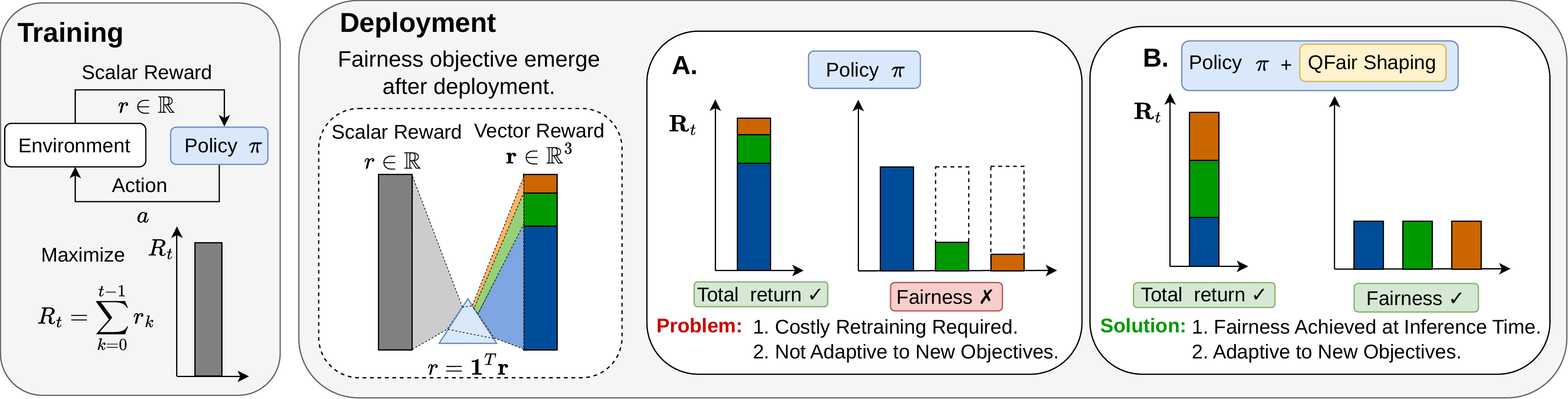}
    \caption{Motivation for inference-time fairness alignment. Left: During training, the policy $\pi$ optimises a scalar return via standard RL. Center: At deployment, the scalar reward is revealed to be structurally decomposable into a vector reward $\mathbf{r} \in \mathbb{R}^n$. Right (A): Deploying the frozen policy $\pi$ directly yields high total return but inequitable allocation across objectives. Right (B): Our approach attaches a lightweight shaping module (QFair) that reweights action probabilities at inference time using welfare-aware Q-values conditioned on the cumulative reward vector. This achieves fair outcomes while preserving the base policy's total return, without retraining the base policy, and with the ability to adapt to new fairness objectives at deployment.}
    \label{fig:motivation}
\end{figure}

Addressing this alignment problem is both practically and ethically important. From a deployment perspective, inference-time alignment reduces retraining costs, improves adaptability, and mitigates training-deployment mismatches~\citep{mujtaba2025aligning}. From a societal perspective, learning human-aligned and fair solutions is essential in high-stakes systems, especially when these systems impact multiple end-users. Existing approaches to aligned and fair RL, including multi-objective RL (MORL) with scalarization~\citep{SiddiqueWengZimmer20, hayes2022practical}, constrained Markov decision processes (CMDPs)~\citep{propfairness_AAMAS20}, and reward shaping~\citep{ng1999policy, kumar2025decaf}, operate at training time and incorporate fairness directly into the learning objective~\citep{jabbari2017fairness, Weng19, siddique2023fairness, siddique2024fairness}. 
While these approaches can produce effective solutions, they generally assume that the desired preferences are known \textit{a priori} and require complete retraining or maintaining a portfolio of policies for different scalarizations~\citep{busa2017multi, zimmer2021learning, do2022optimizing, YuSiddiqueWeng23}. Generalization techniques such as domain randomization~\citep{tobin2017domain} can improve robustness, but they do not enable \emph{post hoc} objective alignment. Meanwhile, policy shaping methods~\citep{knox2010combining,griffith2013policy} and inference-time alignment strategies in LLMs~\citep{ouyang2022training,rafailov2023direct,balashankar2024infalign,xu2024genarm,lin2025parm} demonstrate inference-time behavioral control, yet they do not formalize fairness alignment in RL via policy shaping.
To date, inference-time alignment has largely remained confined to autoregressive generative models rather than long-horizon sequential control with fairness objectives. 

In this paper, we bridge this gap by proposing an \emph{inference-time fairness alignment} approach in RL via policy shaping, enabling a frozen base policy to be adapted toward fairness objectives at deployment, with only a lightweight critic trained offline. We assume access to a base policy $\pi(a \mid s)$ trained to maximize a standard scalar return, and consider settings in which the environment’s feedback can be decomposed into multiple objective components that support a fairness (welfare) formulation.   
Our goal is to construct a shaped policy $\pi'$ that preserves the base policy’s capabilities (i.e., core reward performance) while steering its \emph{long-term} behavior toward fairness.
To achieve this, we introduce a general multiplicative policy-shaping mechanism that reweights action probabilities at inference time using an action-dependent critic. 
We instantiate this critic as a neural network $Q_\phi$, trained offline to approximate a fairness (welfare) action-value signal.
Through extensive experiments across multiple domains, we demonstrate that this inference-time shaping can steer long-term behavior toward fairness objectives while preserving the base policy’s performance.

Our primary contributions are summarized as follows:
\begin{itemize}
\item We formalize the novel problem of inference-time fairness alignment in deep RL as a policy shaping problem~(\Cref{sec:problem_statement}).
\item  We provide theoretical justification that explains when and why inference-time welfare-based shaping can improve fairness objectives without updating the base policy parameters~(\Cref{sec:shaped_policy}).
\item We propose a multiplicative policy-shaping framework that performs test-time action reweighting using a learned fairness critic, enabling adaptation without retraining the base policy~(\Cref{sec:method}).
\item We develop \textbf{QFair}, a history-dependent fairness $Q$-network that conditions on reward vectors to address the non-stationarity inherent in optimizing non-linear welfare functions~(\Cref{sec:qfair}).
\item We provide extensive empirical validation showing that inference-time policy shaping substantially improves welfare while preserving competitive return~(\Cref{sec:experiments}).
\end{itemize}

\section{Related Work}
\paragraph{Inference Time Alignment.}
Inference-time alignment studies how pretrained models can be steered toward new objectives at deployment without updating their core parameters. This line of work has been particularly prominent in LLMs, where frozen models are guided via decoding strategies, auxiliary reward models, or representation-level interventions~\citep{guan2025survey,lu2023inference,scalena2024multi,mujtaba2025aligning}. Motivated by the linear representation hypothesis, \citet{sharkey2025open} studied activation steering, which posits that high-level concepts correspond to directions in latent space; steering is performed by injecting scaled concept vectors into intermediate activations during the forward pass~\citep{siddique2025shifting}. Recent work further optimizes these steering vectors using RL, showing that training low-dimensional intervention vectors while freezing the backbone can rival fine-tuned models on reasoning benchmarks such as GSM8K and MATH~\citep{sinii2025steering}. Complementary approaches include Inference-time Policy Adapters (IPA)~\citep{lu2023inference} and reward-guided decoding methods such as RLHF~\citep{ouyang2022training} and DPO~\citep{rafailov2023direct}, as well as autoregressive reward modeling frameworks like GemARM and PARM~\citep{xu2024genarm,lin2025parm}. Despite their success, these methods are largely confined to autoregressive generation, where alignment can be framed as short-horizon scoring. In contrast, we extend the paradigm of inference-time alignment to RL with trajectory-level fairness objectives.

\paragraph{Policy Shaping and Test-Time Intervention.}
Policy shaping and test-time intervention in RL address how learned policies can be modified without retraining, often through external guidance or auxiliary evaluators. Early work by \citet{griffith2013policy} formalized policy shaping as a mechanism for incorporating human feedback into action selection, while \citet{ferreira2015reinforcement} integrated social signals into dialogue management systems to refine policies online. Preference-based RL further addressed reward misspecification by inferring reward functions from pairwise human feedback~\citep{wirth2016model,christiano2017deep}. Human-in-the-loop corrective strategies such as COACH~\citep{najar2021reinforcement} demonstrated that lightweight action-level corrections can guide exploration without modifying the base objective. More recently, \citet{mujtaba2025aligning} explicitly introduced test-time policy shaping to mitigate unethical behaviors in text-based agents, using classifiers to penalize undesirable actions during deployment. In parallel, token-level steering approaches for LLMs, including SPI~\citep{zhang2025reinforcement} and decoding-time reward combination via Legendre transforms~\citep{shi2024decoding}, highlight the broader applicability of inference-time control. Latent activation editing methods~\citep{das2025latent,sinii2025steering} intervene directly in representation space, whereas policy shaping operates explicitly over the action distribution. Building on this line of work, our framework formulates fairness alignment in RL as a principled inference-time policy-shaping problem for frozen policies under trajectory-level welfare objectives.

\paragraph{Fairness in RL}
Fairness in RL means ensuring equitable outcomes over sequential interactions, where fairness is inherently trajectory-level and temporally coupled. Foundational work in fairness in ML established formal notions such as demographic parity, equalized odds, and individual fairness~\citep{dworkFairnessAwareness2012,zafarParityPreferencebasedNotions2017,AgarwalBeygelzimerDudikLangfordWallach18,SpeicherHeidariGrgicHlacaGummadiSinglaWellerZafar18,zhang2021fairness}. Beyond static prediction, distributive justice perspectives emphasize welfare-based aggregation and equitable resource allocation~\citep{Rawls71,bramsFairDivisionCakeCutting1996,Moulin04,HeidariFerrariGummadiKrause18,malfareNeurips2021}. In RL, \citet{jabbari2017fairness} introduced fairness constraints on state visitation frequencies, while \citet{Jiang2019} proposed decentralized fairness via gossip-based coordination. Actor–critic formulations incorporating fairness utilities were explored by \citet{chen2021bringing}. \citet{liu2018delayed} studied the delayed impact of fair decisions, showing that static fairness criteria can harm protected groups under one-step feedback dynamics, while~\citet{icarte2018using} employ reward machines to expose automaton-structured reward decompositions to the agent. However, these methods either work with static classifiers or encode fixed task structures, rather than operating at the inference-time level and steering a pre-trained policy towards fairness. MORL has provided a natural framework for fairness via scalarization of vector rewards; notably, \citet{SiddiqueWengZimmer20} optimized the generalized Gini welfare function (GGF) in deep RL, satisfying the Pigou–Dalton transfer principle, and subsequent extensions addressed decentralized MARL and broader welfare formulations~\citep{zimmer2021learning,siddique2024fairness,fan2022welfare,do2022optimizing,YuSiddiqueWeng23ECAI,qian2025fair,michailidis2024scalable,siddique2025towards,siddique2026learning}. These approaches, however, assume that fairness preferences are specified during training or require explicitly learning fair policies. In contrast, our work departs from training-time fairness optimization and instead enables fairness alignment at deployment, steering a pretrained reward-maximizing policy toward welfare-based objectives without updating the base policy's parameters.

\section{Preliminary}
\subsection{Reinforcement Learning with Decomposable Rewards}
\label{sec:rl}
We consider finite-horizon Markov decision processes (MDPs) in which the environment feedback admits a natural decomposition across multiple objectives (e.g., stakeholders, resource types, or user groups). Concretely, we consider \emph{multi-objective MDP} (MOMDP) defined by the tuple $\mathcal{M} = (\St, \Ac, P, \bm{r}, T, \gamma)$, where $\St$ is the state space, $\Ac$ is a finite action space, $P(s' \mid s,a)$ is the transition kernel, $T$ is the horizon, and $\gamma \in (0,1]$ is the discount factor. Here, the reward is a vector $\bm r = (r_1, r_2, ..., r_\nO)$, where each component $r_i$ corresponds to objective $i$. The corresponding (discounted) return vector for a trajectory is given by, $\sum_{t=0}^{T-1}\gamma^t \bm{r}(s_t,a_t)$. In many domains, the scalar reward used for training is a simple decomposition of these components. In particular, a common choice is the utilitarian sum, which can be defined as $r(s,a) = h(\bm{r}(s,a)) = \bm{1}^\top \bm{r}(s,a) = \sum_{i=1}^{\nO} r_i(s,a)$, yielding a standard scalar-reward MDP for training. Given a scalar reward $r$, the usual RL objective is to maximize expected discounted return
$J(\pi) = \Expect_\pi \left[ \sum_{t=0}^{T-1} \gamma^{t} r (s_t,a_t) \right]$.
The state-value and action-value functions under $\pi$ are given by,
  $V^\pi(s) = \Expect_\pi\!\left[\sum_{k=t}^{T-1}\gamma^{k-t}\,r_k \;\middle|\; s_t = s\right],~~
  Q^\pi(s,a) = \Expect_\pi\!\left[\sum_{k=t}^{T-1}\gamma^{k-t}\,r_k \;\middle|\; s_t = s,\, a_t = a\right].$
When the state or action spaces are large, deep RL methods approximate these quantities with neural networks. Value-based methods such as DQN~\citep{MnihKavukcuogluSilverRusuVenessBellemareGravesRiedmillerFidjelandOstrovskiPetersenBeattieSadikAntonoglouKingKumaranWierstraLeggHassabis15} estimate $Q^\pi$ and derive policies via an $\epsilon$-greedy approach. Policy-gradient methods directly optimize parameterized policies $\pi_\theta$ via gradient ascent on $J(\pi_\theta)$~\cite{SuttonBarto98}. Actor–critic methods such as A2C~\citep{MnihBadiaMirzaGravesLillicrapHarleySilverKavukcuoglu16} combine both approaches by learning a critic $V_\phi$ to reduce variance in policy-gradient updates, while PPO~\citep{SchulmanWolskiDhariwalRadfordKlimov17} stabilizes training via clipped surrogate objectives. In this paper, these algorithms are used only to obtain pretrained, reward-maximizing base policies; consequently, we adapt these policies at inference time without retraining.

\subsection{Fairness Formulation}
\label{sec:fairness}
To evaluate fairness across objectives, we adopt the welfare-function framework from distributive justice and multi-objective optimization~\citep{Rawls71, Moulin04, SpeicherHeidariGrgicHlacaGummadiSinglaWellerZafar18, SiddiqueWengZimmer20}. A welfare function $\swf: \mathbb{R}^\nO \to \mathbb{R}$ aggregates the return vector into a scalar and provides a measure of social desirability. Following prior work~\citep{Weng19, SiddiqueWengZimmer20, zimmer2021learning}, we require $\swf$ to satisfy three axiomatic properties, namely efficiency, impartiality, and equity.
Linear aggregation functions such as $\bm{1}^\top\bm{u}$ satisfy efficiency and impartiality but generally fail to satisfy equity, motivating non-linear welfare functions. Therefore, in this paper, we focus on the \emph{generalized Gini welfare function} (GGF)~\citep{Weymark81}, defined as
\begin{equation} \label{eq:welfare}
\swf(\bm{u}) = \sum_{i=1}^{\nO} w_i u_i^{\uparrow},
\end{equation}
where $\bm{u}^{\uparrow}$ denotes the sorted utility vector in ascending order and $\bm{w} \in \mathbb{R}_{>0}^{\nO}$ is a strictly decreasing positive weight vector. GGF satisfies all the above properties as it is symmetric, Pareto-monotonic, and Schur-concave. It interpolates between utilitarian and max–min objectives through the choice of weights. While our framework applies to any concave welfare function satisfying the axioms above, we adopt GGF throughout as a canonical choice. When evaluating fairness, an important distinction arises in how the expectation is applied to the GGF. While prior works~\citep{fan2022welfare} optimize the expected welfare, this formulation cannot theoretically ensure fairness due to the non-linearity of the welfare function. Instead, to properly guarantee the fair treatment across objectives, we must optimize the welfare of expected return. Thus, our fairness objective can be expressed as,$ J_\w (\pi) = \swf (\bm J (\pi))$. Since $\swf$ is non-linear, evaluating the welfare of expected returns is inherently a trajectory-level property that cannot be decomposed into per-step scalar rewards. Consequently, a base policy strictly optimized for the step-wise utilitarian objective $J(\pi)$ generally fails to maximize $J_{\mathbf{w}}(\pi)$. This highlights the necessity for inference-time alignment at deployment.

\section{Inference-Time Fairness Alignment}
\label{sec:problem_statement}
We consider the problem of aligning or shaping a pretrained RL policy toward fair outcomes at inference time without updating its learned parameters. The key observation is that, although the base policy was trained with a scalar reward $r=h(\bm r)$, the environment exposes a richer \emph{vector} reward $\bm r \in \mathbb{R}^{\nO}$ that can be leveraged post hoc to optimize a welfare-based fairness objective.

\paragraph{Assumptions.} We require that (i) a frozen pretrained base policy $\pi (a\mid s)$ whose action distribution (or action values) can be queried, (ii) observability of the vector reward $\bm r \in \mathbb{R}^{\nO}$ at deployment, i.e., the scalar training reward is structurally decomposable and its components are exposed by the environment, and (iii) an offline phase in which QFair is trained from base-policy rollouts with light exploration. Hence, no welfare-driven environment interaction and base policy updates are required.

Formally, our inference-time alignment problem can be written as
\begin{equation} \label{eq:main_problem}
   \max_{\pi'} J_{\w}(\pi') = \max_{\pi'} \swf \left(\bm J(\pi') \right), 
\end{equation}
where the expectation is taken with respect to the shaped policy $\pi'$. The goal is therefore to find a policy $\pi'$ that maximizes the welfare objective $J_{\w}(\pi')$ subject to the constraint that $\pi'$ is obtained strictly by reshaping the action distribution of $\pi$ at inference time. However, solving this problem introduces some unique challenges. Since $\swf$ is non-linear, it cannot in general be expressed as a sum of per-step scalar rewards on the original state space. Moreover, the welfare objective depends on the cumulative return vector accumulated along the trajectory, rather than solely on the current environment state. As a result, the welfare-optimal decision at time $t$ depends on the history of rewards obtained so far. Our approach addresses these challenges through a sequence of principled reductions. First, we show that the non-linearity of $\swf$ induces non-Markovianity in the original MDP, because the welfare objective depends on the full trajectory return. Second, we construct a welfare-augmented MDP $\tM$ that restores the Markov property~(\Cref{sec:augmented_mdp}). Third, we formally relate the pretrained base policy to this augmented environment through a policy lifting construction. Finally, within $\tM$, we derive the shaped policy in closed form as the solution to a KL-regularized welfare surrogate that locally lower-bounds the true welfare improvement~(\Cref{sec:shaped_policy}). We then show how this solution can be estimated in practice using a welfare critic learned from base-policy trajectories. The full proofs of our propositions and theorems are provided in Appendix~\ref{app:theo}.

\subsection{Restoring Markovianity via a Welfare-Augmented MDP}
\label{sec:augmented_mdp}
The core difficulty in optimizing $J_{\w}(\pi')$ is that the welfare function $\swf$ is a non-linear function applied to the full trajectory return. Consequently, the objective cannot be decomposed into independent per-step rewards. This means that the welfare-optimal action at time $t$ depends not only on the current state $s_t$ but also on the history of rewards already accrued. For simplicity, we hereafter assume $\gamma=1$ and thus $J$ becomes the cumulative sum of rewards, denoted as $\bm R$. 

\begin{wrapfigure}{r}{0.45\linewidth}
   \centering
   \includegraphics[width=\linewidth]{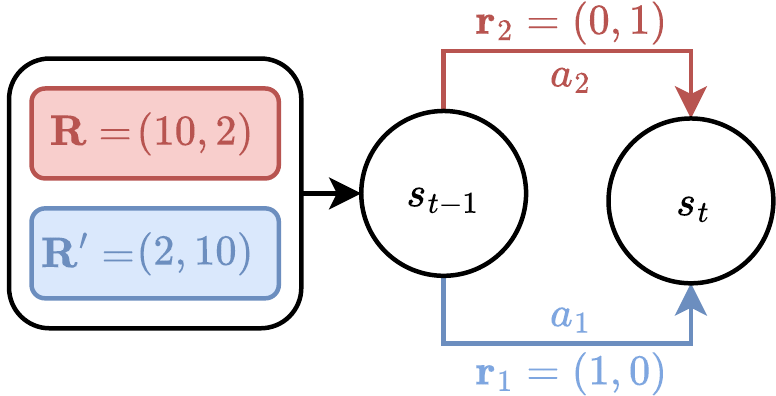}
   \caption{Example where different cumulative rewards lead to different optimal actions.}
   \label{fig:example}
\end{wrapfigure}

\begin{proposition}[Non-Markovianity of Welfare Optimization]
\label{prop:nonmarkov}
When $\swf$ involves sorting (e.g., the GGF defined in~\eqref{eq:welfare}), the welfare-optimal policy is generally not Markovian in $\St$. In particular, there exist states $s$ and distinct reward histories $\bm{R} \neq \bm{R}'$ for which the welfare-optimal actions at $(s, \bm{R})$ and $(s, \bm{R}')$ are different.
\end{proposition}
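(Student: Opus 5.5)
The statement is existential ("there exist states and reward histories"), so I will prove it by an explicit counterexample in the spirit of \Cref{fig:example}. I take a small deterministic MOMDP with $\nO=2$ objectives, $\gamma=1$, and horizon $T=2$. An initial state $s_0$ has two actions, leading deterministically to the same state $s$ with immediate reward vectors $(1,0)$ and $(0,1)$ respectively. At $s$ there are two actions, $a_1$ yielding $(1,0)$ and $a_2$ yielding $(0,1)$, after which the episode terminates. Because the system is deterministic, I can consider deterministic policies, and the expectation in $\swf(\bm J(\pi))$ reduces to the welfare of the realized return vector. This sidesteps the distinction between expected welfare and welfare of expectation discussed in \Cref{sec:fairness}.

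The key computation uses the GGF in \eqref{eq:welfare} with weights $w_1>w_2>0$. At state $s$ with accrued history $\bm R=(1,0)$, choosing $a_1$ gives the final return $(2,0)$ with welfare $w_1\cdot 0+w_2\cdot 2=2w_2$. Choosing $a_2$ gives $(1,1)$ with welfare $w_1+w_2$, which is strictly larger because $w_1>w_2$. So the welfare-optimal action at $(s,(1,0))$ is $a_2$. By the symmetry of $\swf$, the welfare-optimal action at $(s,\bm R')$ with $\bm R'=(0,1)$ is uniquely $a_1$. Hence the same environment state $s$ admits distinct reward histories $\bm R\neq\bm R'$ with distinct, unique optimal continuations, which is exactly the claim.

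To conclude that no Markov policy in $\St$ is welfare-optimal, I will also consider the problem from $s_0$ over all policies. The optimal return is $(1,1)$, with welfare $w_1+w_2$, and it is reachable from either first action as long as the action at $s$ compensates. Now suppose I also make the first step stochastic. For example, if $s_0$ is reached only through an exogenous random first transition, the agent cannot choose it. Then a Markov policy must commit to one distribution $\mu$ over $\{a_1,a_2\}$ at $s$ regardless of history. In that case, the history-dependent policy attains $(1,1)$ almost surely, while any Markov $\mu$ yields a strictly dominated outcome in at least one branch. For this step I will compare welfare per branch (expected welfare); the per-history statement of the proposition is all that is formally required.

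The main obstacle is interpretive rather than technical. I must state precisely what ``welfare-optimal action at $(s,\bm R)$'' means, namely $\argmax_a$ of the optimal achievable $\swf(\bm R+\text{future return})$. This is necessary because the objective in \eqref{eq:main_problem} is stated on expected returns, not per-history. I will define it this way explicitly before the computation. The only property of sorting used is that $\swf$ is non-additive: $w_1>w_2$ makes balancing strictly better. So the argument extends to any strictly Schur-concave $\swf$.
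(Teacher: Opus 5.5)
Your proof is correct and essentially the paper's own argument. The paper's Example~4.1 also takes $\nO=2$, a final choice between rewards $(1,0)$ and $(0,1)$, and $w_1>w_2$ to show that the GGF-optimal action flips between mirrored histories (there $(10,2)$ and $(2,10)$), while you additionally add a first step that makes both histories reachable at the same $s$.

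One caution concerns your optional third paragraph: under the paper's stated objective $\swf(\bm J(\pi))$, a suitably randomized Markov policy at $s$ already attains $\bm J(\pi)=(1,1)$ and hence the optimal value, so "no Markov policy is optimal" holds only for the expected-welfare criterion $\Expect[\swf(\bm R_T)]$ you switch to — harmless, since the proposition does not need it.
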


\paragraph{Example 4.1} \label{ex: non-stationary}
\textit{To illustrate Proposition~\ref{prop:nonmarkov}, we consider an MDP with $\nO = 2$ objectives and GGF weights $w_1 > w_2 > 0$. As seen in~\Cref{fig:example}, at state $s_{t-1}$, action $a_1$ yields $\bm{r}_1 = (1,0)$ and $a_2$ yields $\bm{r}_2 = (0,1)$, both leading to a terminal state $s_t$. If the accrued reward in this trajectory is $\bm{R} = (10,2)$, then $a_2$ produces a final return of $(10,3)$, which sorts $(3,10)$ yielding a welfare score of $3w_1 + 10w_2$. Action $a_1$ produces $(11,2)$, yielding a welfare score of $2w_1 + 11w_2$. Since $w_1 > w_2$ and the difference is $w_1 - w_2 > 0$, $a_2$ is optimal. Conversely, if $\bm{R}' = (2,10)$, the same analysis shows $a_1$ is optimal. Thus, the optimal action at the \emph{same state} $s$ reverses depending on cumulative reward $\bm{R}$.
}

To restore the Markov property, we enrich the state representation with the cumulative reward vector and define a welfare-augmented MDP.
\begin{definition}[Welfare-Augmented MDP]
\label{def:augmented}
Given an MOMDP $\mathcal{M} = (\St, \Ac, P, \bm{r}, T, \gamma)$ and welfare function $\swf$, define the undiscounted finite-horizon MDP $\tM = (\tS, \Ac, \widetilde{P}, \tilde{r}, T)$ where the state space is augmented with the cumulative reward vector, $\tS = \St \times \mathbb{R}^\nO$, providing an augmented state is $\tilde{s}_t = (s_t, \bm{R}_t)$. The transition dynamics update the accumulator deterministically: $ \widetilde{P}\big(\tilde{s}' \mid \tilde{s}, a)\big) = \widetilde{P}((s', \mathbf{R}') \mid (s, \mathbf{R}), a) = P(s' \mid s, a) \cdot \mathbf{1}[\mathbf{R}' = \mathbf{R} + \mathbf{r}(s,a)]$. The scalar reward in $\tM$ is defined as the marginal welfare contribution
    $\tilde{r}\big((s, \bm{R}),\, a\big)
    \;=\;
    \phi_{\bm{w}}\!\big(\bm{R} + \bm{r}(s,a)\big) - \phi_{\bm{w}}(\bm{R}).$
\end{definition}

\begin{theorem}[Augmented MDP Equivalence]
\label{thm:augmented}
$\tM$ is Markovian, and for any policy $\tilde{\pi}$ on $\tS$:
\begin{equation}\label{eq:telescoping}
  \Expect_{\tilde{\pi}}\!\left[\sum_{t=0}^{T-1} \tilde{r}(\tilde{s}_t, a_t)\right]
  \;=\;
  \Expect_{\tilde{\pi}}\!\big[\phi_{\bm{w}}(\bm{R}_T)\big] - \phi_{\bm{w}}(\bm{0}).
\end{equation}
Hence, maximizing the scalar return in $\tM$ is equivalent to maximizing the welfare objective $J_{\w}(\pi')$.
\end{theorem}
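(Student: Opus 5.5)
The proof has three parts: Markovianity of $\tM$, the telescoping identity \eqref{eq:telescoping}, and the equivalence of objectives.

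\emph{Markovianity.} I would read off directly from Definition~\ref{def:augmented} that the next augmented state $(s',\bm{R}')$ is determined by the current pair $((s,\bm{R}),a)$ alone. The environment component is drawn from $P(\cdot\mid s,a)$, which is Markov in $\mathcal{M}$. The accumulator component is the deterministic map $\bm{R}'=\bm{R}+\bm{r}(s,a)$. Conditioning $\widetilde{P}$ on the full augmented history $(\tilde{s}_0,a_0,\dots,\tilde{s}_t,a_t)$ therefore gives the same kernel as conditioning on $(\tilde{s}_t,a_t)$. I would also check that $\tilde{r}(\tilde{s},a)=\phi_{\w}(\bm{R}+\bm{r}(s,a))-\phi_{\w}(\bm{R})$ is a measurable function of $(\tilde{s},a)$ only. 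This is immediate, and GGF is continuous (indeed piecewise linear), so measurability holds.

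\emph{Telescoping.} Fix the initial accumulator $\bm{R}_0=\bm{0}$. On every trajectory generated by $\tilde{\pi}$, the transition rule gives $\bm{R}_{t+1}=\bm{R}_t+\bm{r}(s_t,a_t)$ almost surely, so $\tilde{r}(\tilde{s}_t,a_t)=\phi_{\w}(\bm{R}_{t+1})-\phi_{\w}(\bm{R}_t)$ pathwise. Summing over $t=0,\dots,T-1$ gives
\begin{equation*}
\sum_{t=0}^{T-1}\tilde{r}(\tilde{s}_t,a_t)=\phi_{\w}(\bm{R}_T)-\phi_{\w}(\bm{0})
\end{equation*}
on every trajectory. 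Taking expectations yields \eqref{eq:telescoping}. Integrability holds because the horizon is finite and I assume bounded rewards, which bounds $\bm{R}_T$ and hence $\phi_{\w}(\bm{R}_T)$.

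\emph{Equivalence.} Since $\phi_{\w}(\bm{0})$ is a constant, maximizing the return in $\tM$ is the same as maximizing $\Expect_{\tilde{\pi}}[\phi_{\w}(\bm{R}_T)]$ over policies on $\tS$. Each policy on $\tS$ induces a history-dependent policy on $\mathcal{M}$, because $\bm{R}_t$ is a function of the history. Conversely, by Proposition~\ref{prop:nonmarkov} and the standard fact that a Markov optimal policy exists for the finite-horizon MDP $\tM$, an optimum over $\tS$-policies attains the optimum over history-dependent policies in $\mathcal{M}$ for this terminal objective.

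\emph{Main obstacle.} The identity delivers the \emph{expected welfare} $\Expect[\phi_{\w}(\bm{R}_T)]$, whereas $J_{\w}(\pi')=\phi_{\w}(\Expect[\bm{R}_T])$ is the \emph{welfare of the expected return}. By concavity of GGF and Jensen's inequality, $\Expect[\phi_{\w}(\bm{R}_T)]\le\phi_{\w}(\Expect[\bm{R}_T])$, with equality when transitions are deterministic or $\bm{R}_T$ is almost surely ordered consistently. I would therefore state the equivalence with respect to the terminal-welfare objective, in the sense the theorem intends, and either restrict to that case or present the $\tM$ return as a lower bound on $J_{\w}(\pi')$ that is tight under the conditions above.
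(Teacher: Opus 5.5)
Your Markovianity argument and the telescoping identity are the paper's proof: the paper notes that $\widetilde P$ and $\tilde r$ depend only on $((s_t,\bm R_t),a_t)$, then telescopes $\phi_{\bm w}(\bm R_{t+1})-\phi_{\bm w}(\bm R_t)$ pathwise. Your explicit $\bm R_0=\bm 0$ and integrability remarks just make this tidier.

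Your ``main obstacle'' is correct, and it is a gap in the paper rather than in your argument. The paper's proof stops after telescoping and asserts the final ``Hence'' without comment. Yet its fairness formulation defines $J_{\bm w}(\pi)=\phi_{\bm w}(\bm J(\pi))$ as the welfare of the expected return, precisely to distinguish it from expected welfare. With $\gamma=1$ and $\phi_{\bm w}(\bm 0)=0$, the identity actually gives $\widetilde J(\tilde\pi')=\Expect[\phi_{\bm w}(\bm R_T)]\le\phi_{\bm w}(\Expect[\bm R_T])=J_{\bm w}(\pi')$. This follows from concavity of GGF, which is a minimum of linear forms by the rearrangement inequality. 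So the lower-bound reading is the defensible one. The same conflation also undermines the paper's later identity $J_{\bm w}(\pi')-J_{\bm w}(\pi)=\widetilde J(\tilde\pi')-\widetilde J(\tilde\pi)$.

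Two refinements to your caveat.

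First, deterministic transitions do not give equality. A stochastic policy, such as the softmax-shaped $\pi'$, still randomizes $\bm R_T$. More importantly, the two maximization problems can have different solutions. Take one state, $T=1$, $\bm w=(2,1)$, and actions with rewards $(1,0)$, $(0,1)$ and $(0.4,0.4)$. The unique maximizer of the $\tM$ return is the third action, with $\widetilde J=J_{\bm w}=1.2$. The uniform mixture of the first two has $\tM$ return only $1$ but $J_{\bm w}=\phi_{\bm w}(0.5,0.5)=1.5$. Per-policy equality therefore requires an almost-surely common sorting permutation of $\bm R_T$, which is your second condition. Alternatively it holds when dynamics, initial state \emph{and} policy are all deterministic. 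Even then, equivalence of the optimization problems holds only within that restricted policy class.

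Second, the non-Markovianity proposition is not what you need for the converse direction. The relevant facts are:
\begin{itemize}
\item augmented histories and original histories (with observed rewards) determine each other;
\item finite-horizon MDPs admit optimal Markov policies.
\end{itemize}
Those optimal policies are in general time-dependent, and $t$ is not a function of $(s,\bm R)$ when $\gamma=1$. You should therefore allow nonstationary $\tilde\pi_t$ or append $t$ to the augmented state.
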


The augmented MDP reduces the non-Markovian welfare optimization to a standard scalar-reward MDP.
We define the welfare value functions in $\tM$:
  $\widetilde{V}^{\tilde{\pi}}(\tilde{s})
  = \Expect_{\tilde{\pi}}\!\left[\sum_{k=t}^{T-1}\tilde{r}_k \;\middle|\; \tilde{s}_t = \tilde{s}\right],~~
  \widetilde{Q}^{\tilde{\pi}}(\tilde{s}, a)
  = \tilde{r}(\tilde{s}, a) + \Expect_{s'}[\widetilde{V}^{\tilde{\pi}}(\tilde{s}')],$
and the welfare advantage $\widetilde{A}^{\tilde{\pi}}(\tilde{s}, a) = \widetilde{Q}^{\tilde{\pi}}(\tilde{s}, a) - \widetilde{V}^{\tilde{\pi}}(\tilde{s})$.

\begin{remark}
\Cref{thm:augmented} extends to $\gamma < 1$ by augmenting the state with the discounted cumulative vector $\bm R_t = \sum_{k=0}^{t-1} \gamma^k \bm r(s_k,a_k)$ together with the running discount $\gamma^t$, and defining the marginal welfare reward $\tilde{r}\big((s, \bm{R}),\, a\big) =     \phi_{\bm{w}}\!\big(\bm{R} + \gamma^t \bm{r}(s,a)\big) - \phi_{\bm{w}}(\bm{R})$. The telescoping argument in the proof of~\Cref{thm:augmented} remains unchanged, and the augmented return again equals $\Expect_{\tilde{\pi}}\!\big[\phi_{\bm{w}}(\bm{R}_T)\big] - \phi_{\bm{w}}(\bm{0})$.
\end{remark}

\subsection{Anchoring on the base policy via policy lifting}
Since the base policy $\pi$ is trained in $\mathcal{M}$ with scalar reward, but its shaped policy $\pi'$ must operate in $\tM$, we therefore define its canonical embedding into the augmented state space. Note that both of these MDPs share the same dynamics and action space but differ fundamentally in state representation and objective. To reason about $\pi$ within $\tM$, its canonical embedding is defined as below.

\begin{definition}[Lifting] 
The lifted policy $\tilde\pi$ on $\widetilde{\St}$ is $ \tilde{\pi}(a \mid \tilde{s}) := \pi(a \mid s), \forall \tilde{s} = (s, \bm{R}) \in \tS,\; a \in \Ac$.
\end{definition}
While $\tilde{\pi}$ is generally a suboptimal policy in $\tM$ (because it ignores the $\bm{R}$-component necessary for welfare optimality), it perfectly preserves the base policy's interaction with the environment.

\begin{proposition}[Trajectory Equivalence]
\label{prop:traj_equiv}
Running $\pi$ in original MDP $\mathcal{M}$ and running $\tilde{\pi}$ in augmented MDP $\tM$ induce same distributions over base-state trajectories $\tau = (s_0, a_0, s_1, a_1, \ldots, s_T)$. That is, $\Pr^\pi_{\Md}(\tau) \;=\; \Pr^{\tilde{\pi}}_{\tM}(\tau)$, where $\Pr(\tau)$ is the probability of an entire trajectory $\tau$ occurring.
\end{proposition}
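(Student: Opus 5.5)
We argue by induction on the trajectory length, factorizing both trajectory probabilities by the chain rule. Fix the initial state distribution $\mu$ of $\Md$ and start $\tM$ from $\tilde{s}_0=(s_0,\bm{0})$ with $s_0\sim\mu$, so that $\bm{R}_0=\bm{0}$ deterministically. For a base-state prefix $\tau_{0:t}=(s_0,a_0,\ldots,s_t)$, we first observe that in $\tM$ the accumulator is a deterministic function of the prefix. By the indicator in the transition kernel of \Cref{def:augmented}, any sequence with positive probability has $\bm{R}_t=\sum_{k<t}\bm{r}(s_k,a_k)=:\bm{R}_t(\tau_{0:t})$. Hence the law of the augmented trajectory $(\tilde{s}_0,a_0,\ldots,\tilde{s}_T)$ is concentrated on the lift of base trajectories. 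The marginal over base components, $\Pr^{\tilde\pi}_{\tM}(\tau)$, is therefore obtained by summing (or integrating) out $\bm{R}$, and only the single point $\bm{R}_t(\tau_{0:t})$ contributes.

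Next we write the two factorizations explicitly. In $\Md$,
\[
\Pr^\pi_{\Md}(\tau)=\mu(s_0)\prod_{t=0}^{T-1}\pi(a_t\mid s_t)\,P(s_{t+1}\mid s_t,a_t).
\]
In $\tM$, marginalizing the deterministic accumulator gives
\[
\Pr^{\tilde\pi}_{\tM}(\tau)=\mu(s_0)\prod_{t=0}^{T-1}\tilde\pi\big(a_t\mid (s_t,\bm{R}_t)\big)\,\sum_{\bm{R}'}P(s_{t+1}\mid s_t,a_t)\,\mathbf{1}[\bm{R}'=\bm{R}_t+\bm{r}(s_t,a_t)].
\]
The indicator sum equals $1$. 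By the lifting definition, $\tilde\pi(a_t\mid(s_t,\bm{R}_t))=\pi(a_t\mid s_t)$ regardless of $\bm{R}_t$. The two products therefore agree term by term.

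To make this rigorous, we run the induction on $t$. The claim at step $t$ is that the prefix laws agree and that, under $\tM$, $\bm{R}_t=\bm{R}_t(\tau_{0:t})$ almost surely. The base case is the shared $\mu$ together with $\bm{R}_0=\bm{0}$. The inductive step multiplies by the equal one-step conditionals $\pi(a_t\mid s_t)P(s_{t+1}\mid s_t,a_t)$ and updates the accumulator deterministically.

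The main obstacle is measure-theoretic rather than conceptual. Because $\tS=\St\times\mathbb{R}^{\nO}$ is uncountable in the $\bm{R}$ coordinate, the indicator has to be read as a Dirac kernel $\delta_{\bm{R}+\bm{r}(s,a)}$, and the ``sum'' over $\bm{R}'$ as integration against that Dirac. We handle this by stating the kernel as $P\otimes\delta$ and using $\int\delta_x(d\bm{R}')=1$. This is also the point where we need $\bm{r}$ to be a deterministic function of $(s,a)$; stochastic rewards would require carrying the reward realization in the trajectory instead.
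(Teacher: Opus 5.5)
Your proposal is correct and follows essentially the same route as the paper: both factor the trajectory law by the chain rule, use the lifting identity $\tilde\pi(a_t\mid\tilde s_t)=\pi(a_t\mid s_t)$, and note that $\widetilde P$ marginalizes to $P(s'\mid s,a)$ because the accumulator update is deterministic. Your induction, the explicit initialization $\bm R_0=\bm 0$, and the Dirac-kernel reading of the indicator make rigorous what the paper's two-line proof leaves implicit; note also that, since $\tilde\pi$ ignores $\bm R$, the determinism of $\bm r$ is not actually needed for the conclusion.
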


A direct and powerful consequence of Proposition~\ref{prop:traj_equiv} is that rollouts collected by the base policy can be \emph{directly reused} for offline training in $\tM$. We simply annotate each transition with the cumulative reward vector $\bm{R}_t$ (computed post-hoc from the recorded vector rewards) and the marginal welfare reward.
Furthermore, although $\pi$ was trained to maximize $J(\pi)$, it induces a well-defined welfare $J_{\w}(\pi)$ that provides a guaranteed performance baseline. The following result quantifies why a good base policy is a good starting point for welfare improvement.

\begin{proposition}[Welfare Floor]
\label{prop:welfare_floor}
If the training aggregation is the utilitarian sum ($h = \bm{1}^\top$), then for any policy $\pi$:
\begin{equation}\label{eq:welfare_floor}
  \J_{\w}(\pi) \;\ge\; w_\nO \cdot J(\pi),
\end{equation}
where $w_\nO > 0$ is the smallest GGF weight.
\end{proposition}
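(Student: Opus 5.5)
The plan is to reduce the statement to a pointwise inequality about the GGF on the vector of expected returns, and then apply it at $\bm u = \bm J(\pi)$.

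\emph{Step 1: linearity.} Since the training aggregation is $h = \bm 1^\top$, linearity of expectation gives
\begin{equation*}
J(\pi) = \Expect_\pi\Big[\sum_t \bm 1^\top \bm r(s_t,a_t)\Big] = \bm 1^\top \bm J(\pi) = \sum_{i=1}^{\nO} J_i(\pi).
\end{equation*}
The sum of the components is invariant under permutation, so this also equals $\sum_i J(\pi)^{\uparrow}_i$. By the definition $J_{\w}(\pi) = \swf(\bm J(\pi))$, it therefore suffices to show that $\swf(\bm u) \ge w_\nO\, \bm 1^\top \bm u$ for the relevant vectors $\bm u$.

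\emph{Step 2: comparing weights.} For a general $\bm u$ I would write
\begin{equation*}
\swf(\bm u) - w_\nO \sum_i u^{\uparrow}_i = \sum_{i=1}^{\nO-1} (w_i - w_\nO)\, u^{\uparrow}_i .
\end{equation*}
The weights are strictly decreasing, so $w_\nO$ is the smallest weight and every coefficient $w_i - w_\nO$ is nonnegative. The right-hand side is then nonnegative as soon as $u^{\uparrow}_i \ge 0$ for $i<\nO$. Since these are the smallest coordinates, this is equivalent to $\bm u \ge \bm 0$ componentwise.

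\emph{Step 3: the sign condition (main obstacle).} The inequality is false without a sign condition. For example, with $\bm u = (-1,0)$ we get $\swf(\bm u) = -w_1 < -w_2 = w_2\,\bm 1^\top \bm u$. I would therefore make explicit the standing assumption, implicit in the welfare/fair-allocation setting, that the vector rewards are nonnegative, $\bm r(s,a) \ge \bm 0$. Under this assumption each $J_i(\pi) \ge 0$, Step 2 applies at $\bm u = \bm J(\pi)$, and combining with Step 1 gives $J_{\w}(\pi) \ge w_\nO J(\pi)$.

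If one prefers to avoid the assumption, the only sign requirement Step 2 actually uses is that the expected returns satisfy $\bm J(\pi) \ge \bm 0$ (or, for $\gamma<1$, that the discounted returns do). I would state the proposition under that weaker hypothesis.
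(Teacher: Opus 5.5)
Your argument is the same as the paper's: it bounds each GGF weight below by $w_\nO$ to get $\sum_i w_i u^{\uparrow}_i \ge w_\nO \sum_i u^{\uparrow}_i$, and then uses $h=\bm 1^\top$ and linearity of expectation to identify $\sum_i J_i(\pi)^{\uparrow}$ with $J(\pi)$. Your counterexample $\bm u=(-1,0)$ is correct, and the paper's one-line proof applies that weight comparison without ever stating the nonnegativity of the returns it needs, so your hypothesis $\bm J(\pi)\ge \bm 0$ is a genuine repair of a statement that is false as written, not an unnecessary extra restriction.
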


Essentially, these results, combined with Theorem~\ref{thm:augmented}, provide welfare improvement achievable by any shaped policy $\pi'$:
  $\J_{\w}(\pi') - \J_{\w}(\pi) =
  \widetilde{J}(\tilde{\pi}') - \widetilde{J}(\tilde{\pi}),$
where $\widetilde{J}(\cdot)$ denotes the expected return in $\tM$.
This identity provides a formal bridge, which is \emph{improving fairness over the base policy is mathematically equivalent to a standard policy improvement problem over the lifted policy in $\tM$}.

\subsection{Optimal KL-Regularized Inference-Time Policy Shaping}
\label{sec:shaped_policy}
We now derive the shaped policy as the closed-form solution to a KL-regularized surrogate optimization problem in $\tM$. The KL-regularization serves two purposes. First, it keeps the shaped policy $\pi'$ close to the original policy $\pi$, and second, it ensures the surrogate objective remains a valid approximation to the true welfare gain.
Solving this optimization problem and using the performance difference lemma~\citep{10.5555/645531.656005} applied to $\tM$, we can express the welfare gap as $\widetilde{J}(\tilde{\pi}') - \widetilde{J}(\tilde{\pi}) = T \Expect_{\tilde{s} \sim d^{\tilde{\pi}'}}\!
\Big[\widetilde{A}^{\tilde{\pi}}(\tilde{s}, a)\Big]$. Since $d^{\tilde{\pi}'}$ depends on the unknown policy, we replace it with $d^{\tilde{\pi}}$ to obtain a tractable welfare surrogate that shares the same gradient at $\tilde{\pi}' = \tilde{\pi}$ and provides a local lower bound on the true improvement under a KL penalty~\citep{SchulmanLevineAbbeelJordanMoritz15}. Therefore, to maximize this objective while preserving the environmental competence guaranteed by the welfare floor, we restrict the shaped policy to a KL-divergence trust region around $\tilde{\pi}$:
\begin{align}
  \max_{\tilde{\pi}'} \quad
  & T\;\Expect_{\tilde{s} \sim d^{\tilde{\pi}}}
    \!\Big[\widetilde{A}^{\tilde{\pi}}(\tilde{s}, a)\Big] \label{eq:kl_opt1} \\
  \text{s.t.} \quad
  & \Expect_{\tilde{s} \sim d^{\tilde{\pi}}}
    \Big[\KL\!\big(\tilde{\pi}'(\cdot|\tilde{s})
    \,\big\|\, \tilde{\pi}(\cdot|\tilde{s})\big)\Big]
    \le \varepsilon,
  \qquad
  \tilde{\pi}'(\cdot|\tilde{s}) \in \Delta(\Ac) \;\;\forall\, \tilde{s}, \label{eq:kl_opt2}
\end{align}
We emphasize that problem~\eqref{eq:kl_opt1}-\eqref{eq:kl_opt2} is a surrogate objective as it maximizes a local lower bound on the true welfare improvement, valid within the KL trust region~\citep{SchulmanLevineAbbeelJordanMoritz15}, rather than the true objective~\eqref{eq:main_problem}. The following closed form is therefore optimal for the surrogate, and constitutes one step of KL-regularized policy improvement over the lifted base policy.

\begin{theorem}[Optimal Shaped Policy]
\label{thm:shaped}
The unique solution to~\eqref{eq:kl_opt1}-\eqref{eq:kl_opt2} is:
\begin{equation}\label{eq:shaped_policy}
  \boxed{
    \pi'(a \mid s, \bm{R})
    \;=\;
    \frac{
      \pi(a \mid s)\;\exp\!\Big(\frac{1}{\beta}\,\widetilde{Q}^{\tilde{\pi}}\!\big((s,\bm{R}),\, a\big)\Big)
    }{
      \displaystyle\sum_{a'} \pi(a' \mid s)\;\exp\!\Big(\frac{1}{\beta}\,\widetilde{Q}^{\tilde{\pi}}\!\big((s,\bm{R}),\, a'\big)\Big)
    }
  }
\end{equation}
where $\beta > 0$ is the Lagrange multiplier for the KL constraint, determined by $\varepsilon$. The shaped policy $\pi'$ is computed directly from $\pi$ and the welfare value function $\widetilde{Q}^{\tilde{\pi}}$ with no iterative update.
\end{theorem}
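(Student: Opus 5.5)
The plan is to read the objective in \eqref{eq:kl_opt1} as $T\,\Expect_{\tilde{s}\sim d^{\tilde{\pi}}}\big[\sum_a \tilde{\pi}'(a\mid\tilde{s})\,\widetilde{A}^{\tilde{\pi}}(\tilde{s},a)\big]$, i.e.\ actions drawn from the new policy. Since $\widetilde{V}^{\tilde{\pi}}(\tilde{s})$ does not depend on $a$ and $\sum_a\tilde{\pi}'(a\mid\tilde{s})=1$, the objective equals $T\,\Expect_{d^{\tilde{\pi}}}[\sum_a \tilde{\pi}'(a|\tilde{s})\widetilde{Q}^{\tilde{\pi}}(\tilde{s},a)]$ minus a constant. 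So advantage and $Q$ give the same maximizer. This is also why the closed form \eqref{eq:shaped_policy} may use $\widetilde{Q}^{\tilde{\pi}}$.

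\textbf{Lagrangian step.} The problem is a linear objective over a product of simplices with one convex constraint. At $\tilde{\pi}'=\tilde{\pi}$ the KL is $0<\varepsilon$, so Slater's condition holds, strong duality holds, and KKT is sufficient. Introduce a multiplier $\beta/T\ge 0$ for the KL constraint and write the Lagrangian
\[
\mathcal{L}(\tilde{\pi}') = T\,\Expect_{\tilde{s}\sim d^{\tilde{\pi}}}\Big[\sum_a \tilde{\pi}'(a|\tilde{s})\widetilde{Q}^{\tilde{\pi}}(\tilde{s},a) - \tfrac{\beta}{T}\cdot T\,\KL\big(\tilde{\pi}'(\cdot|\tilde{s})\,\|\,\tilde{\pi}(\cdot|\tilde{s})\big)\Big].
\]
Rescaling $\beta$ absorbs the factor $T$. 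The Lagrangian decouples across augmented states. For each $\tilde{s}$ in the support of $d^{\tilde{\pi}}$ we maximize $\sum_a p(a)\widetilde{Q}(\tilde{s},a)-\beta\,\KL(p\,\|\,\tilde{\pi}(\cdot|\tilde{s}))$ over $p\in\Delta(\Ac)$.

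\textbf{Per-state maximizer.} This is the Gibbs/Donsker--Varadhan variational formula. I would prove it directly by rewriting the per-state objective as
\[
\beta\log Z(\tilde{s}) - \beta\,\KL\big(p\,\|\,p^\star\big),
\qquad
p^\star(a)\propto\tilde{\pi}(a|\tilde{s})e^{\widetilde{Q}(\tilde{s},a)/\beta},
\]
where $Z(\tilde{s})$ is the normalizer. Gibbs' inequality then gives the unique maximizer $p=p^\star$.

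Because $\tilde{\pi}(a|\tilde{s})=\pi(a|s)$ by the lifting definition, $p^\star$ is exactly \eqref{eq:shaped_policy}. Strict convexity of KL in its first argument gives uniqueness on the support of $d^{\tilde{\pi}}$. Off-support states are unconstrained, so I would either define $\pi'$ by the same formula there or state uniqueness $d^{\tilde{\pi}}$-almost everywhere. Note that $p^\star$ is supported only where $\pi(a|s)>0$; this is consistent with finite KL.

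\textbf{Choosing $\beta$ (expected main obstacle).} We must show a $\beta>0$ exists with complementary slackness, $\Expect[\KL(p^\star_\beta\|\tilde{\pi})]=\varepsilon$. The map $\beta\mapsto \Expect_{d^{\tilde{\pi}}}[\KL(p^\star_\beta\|\tilde{\pi})]$ is continuous and nonincreasing. It tends to $0$ as $\beta\to\infty$, and to the KL of the greedy-restricted policy as $\beta\to 0$. For $\varepsilon$ below that limit, the intermediate value theorem gives $\beta(\varepsilon)$, and the KKT conditions then certify optimality.

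Two degenerate cases need a caveat. If $\widetilde{Q}$ is constant in $a$ on the support, the solution is $\tilde{\pi}$ for any $\beta$. If $\varepsilon$ is too large, the constraint is inactive and the optimum is greedy, the $\beta\to0$ limit. I would note that the theorem implicitly assumes the constraint is active. Proving monotonicity cleanly, via the derivative of KL in $\beta$ being a negative variance term, is the step I expect to need the most care.
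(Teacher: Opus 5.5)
Your proposal is correct and shares the paper's skeleton: relax the KL constraint, decouple over augmented states, pass from $\widetilde{A}^{\tilde{\pi}}$ to $\widetilde{Q}^{\tilde{\pi}}$, and fix $\beta$ by the intermediate value theorem. The key per-state step, however, is genuinely different.

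The paper adds normalization multipliers $\zeta(\tilde{s})$ and differentiates the Lagrangian. It solves the first-order stationarity condition for $\log p(a)$, normalizes, and only at the end cancels $\widetilde{V}^{\tilde{\pi}}(\tilde{s})$. Uniqueness is then asserted from strict concavity of the Lagrangian.

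You instead cancel $\widetilde{V}^{\tilde{\pi}}$ in the objective first. You then get the maximizer from the identity $\sum_a p(a)\widetilde{Q}^{\tilde{\pi}}(\tilde{s},a)-\beta\,\KL(p\,\|\,\tilde{\pi}(\cdot|\tilde{s}))=\beta\log Z(\tilde{s})-\beta\,\KL(p\,\|\,p^\star)$ together with Gibbs' inequality.

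The stationarity route is the familiar mechanical computation. Yours is a global certificate on the whole simplex: it needs no interiority assumption to differentiate $p\log p$ and no separate sufficiency check, and it yields uniqueness in the same stroke. Once $\beta$ makes the constraint tight, the identity alone gives
$\Expect_{d^{\tilde{\pi}}}[\sum_a p\,\widetilde{Q}^{\tilde{\pi}}]\le\Expect_{d^{\tilde{\pi}}}[\sum_a p^\star\widetilde{Q}^{\tilde{\pi}}]-\beta\,\Expect_{d^{\tilde{\pi}}}[\KL(p\,\|\,p^\star)]$
for every feasible $p$, so Slater is not even needed.

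Your version also tightens several points the paper glosses over:
\begin{itemize}
\item Uniqueness holds only $d^{\tilde{\pi}}$-almost everywhere; the paper's derivation divides by $d^{\tilde{\pi}}(\tilde{s})$.
\item Uniqueness of the Lagrangian maximizer transfers to the constrained problem via a duality argument, which the paper omits.
\item Monotonicity of $\beta\mapsto\Expect[\KL(p^\star_\beta\,\|\,\tilde{\pi})]$ follows from its derivative $-\mathrm{Var}_{p^\star_\beta}(\widetilde{Q}^{\tilde{\pi}})/\beta^3$, rather than being asserted.
\end{itemize}

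Two small corrections. First, your displayed Lagrangian corresponds to multiplier $T\beta$ on $\Expect[\KL]\le\varepsilon$, not $\beta/T$. This is harmless given your rescaling remark, but it means the $\beta$ in \eqref{eq:shaped_policy} is the true multiplier divided by $T$. The paper sidesteps this by silently dropping $T$ from its Lagrangian.

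Second, in your degenerate cases uniqueness fails outright rather than merely being trivial:
\begin{itemize}
\item If $\widetilde{Q}^{\tilde{\pi}}$ is constant in $a$ on $\mathrm{supp}\,\pi(\cdot|s)$ for $d^{\tilde{\pi}}$-a.e.\ state, every feasible policy is optimal.
\item If $\varepsilon$ exceeds the greedy-restricted KL and the argmax has ties, every feasible distribution on the argmax set is optimal.
\end{itemize}
Both cases confirm your remark that the theorem implicitly requires the KL constraint to bind.
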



\section{Training Welfare Critic and Inference-Time Shaping}
\label{sec:method}
To solve the problem~\eqref{eq:main_problem} of finding a fair policy at inference, an agent must evaluate the welfare action-value function $\widetilde{Q}^{\tilde{\pi}}(\tilde{s}, a)$ derived in~\Cref{thm:shaped}. Because computing this quantity exactly is intractable in environments with large or continuous state spaces, we introduce \textbf{QFair}, a lightweight neural network welfare critic, parametrized by $Q_\phi$ that learns to approximate $\widetilde{Q}^{\tilde{\pi}}$ from offline data. The design and integration of QFair bridge our theoretical framework developed in~\Cref{sec:problem_statement} and enables practical inference-time fairness alignment with standard deep RL algorithms. Crucially, QFair is the only component in our framework that requires training. Once learned, it is frozen and utilized purely for inference-time scoring.

\subsection{Learning the Welfare Critic (QFair)} \label{sec:qfair}
We train QFair to approximate the welfare Q-function in $\tM$ from offline data, using an architecture that is influenced by the non-Markovianity of the welfare objective (as established in~\Cref{prop:nonmarkov}). A critic conditioned only on the base state $s$ cannot correctly assess welfare contributions, as the same action can yield opposite fairness impacts depending on the cumulative reward. Therefore, $Q_\phi$ is constructed to take the fully augmented state $\tilde{s} = (s, \bm{R})$ as input. The network outputs a welfare score for each action, which approximates $\widetilde{Q}^{\tilde{\pi}}(\tilde{s},a)$. We parameterize $Q_{\phi}$ as a three-layer MLP in which the state and accrued reward are concatenated at the input layer.

To train QFair from base policy rollouts with light exploration and without requiring additional environment interactions, we rely on the trajectory equivalence result established in~\Cref{prop:traj_equiv}. This result shows that the trajectories from the base policy in original MDP $\mathcal{M}$ induce the same state-action distribution as the lifted policy in the augmented MDP $\tM$. Therefore, data collected in the original environment can be reused to train QFair. To collect offline data, we roll out the behavior policy $\mu(a \mid s) = (1 - \beta_{\mathrm{explore}})\;\pi(a \mid s) + \beta_{\mathrm{explore}}\; (1/|\Ac|)$ that mixes the frozen base policy with uniform exploration. Each transition is then annotated post-hoc with the running accumulator $\bm R_t$ and the marginal welfare reward $\tilde{r}_t = \phi_{\bm{w}}(\bm{R}_{t+1}) - \phi_{\bm{w}}(\bm{R}_t)$, yielding a dataset $\sD = \{(\tilde{s}_t, a_t, \tilde{r}_t, \tilde{s}_{t+1}, d_t)\}$ of transitions in $\tM$. Note that light uniform exploration is included only because near-deterministic base policies under-explore the augmented state space, which is insufficient for TD generalization.

Because Theorem~\ref{thm:augmented} guarantees that $\tM$ is a proper MDP, the Bellman equation holds, and we can train QFair using TD learning  $\widetilde{Q}^\mu(\tilde{s}, a) = \tilde{r}(\tilde{s}, a)
  + \gamma\,\Expect_{s' \sim P}\!\Big[\textstyle\sum_{a'}\mu(a'|s')\,\widetilde{Q}^\mu(\tilde{s}', a') \Big]$. 
Then we minimize the mean-squared Bellman error with Expected SARSA targets:
\begin{align}\label{eq:td_target}
  y = \tilde{r} + \gamma(1-d)\textstyle\sum_{a'}\mu(a'|s')\,Q_\phi(\tilde{s}', a'),
  \qquad
  \mathcal{L}(\phi) = \Expect_{(\tilde{s},a,\tilde{r},\tilde{s}',d)\sim\sD}\!\Big[(Q_\phi(\tilde{s}, a) - y)^2\Big].
\end{align}
Expected SARSA averages over the behaviour policy's action distribution at $\tilde{s}'$, avoiding the maximization bias of Q-learning while remaining consistent with the data collection process. At convergence, $Q_\phi$ approximates the welfare Q-function $\widetilde{Q}^\mu$ in $\tM$, which is close to $\widetilde{Q}^{\tilde{\pi}}$ and required for policy shaping. The full algorithm for training QFair is provided in~\Cref{alg:train} (see Appendix~\ref{app:algos}).

\subsection{Inference-time Policy Shaping}
\label{sec:framework}

\begin{figure}[t]
    \centering
    \includegraphics[width=1.0\linewidth]{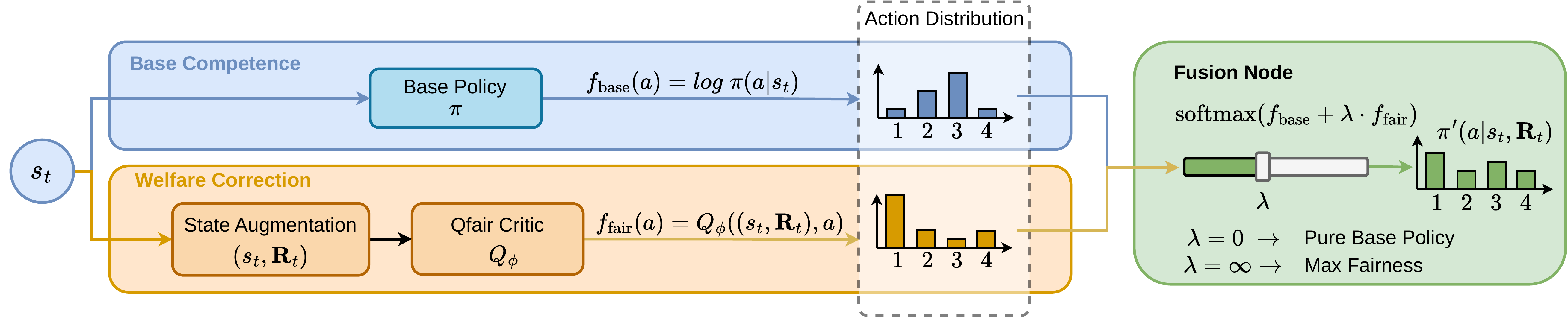}
    \caption{Inference-time policy shaping pipeline. At each step, the current state $s_t$ is processed along two parallel paths. The base policy produces the base competence signal, while the QFair critic receives the augmented state and outputs the welfare correction signal . The fusion node combines both signals with a softmax function, where the alignment strength $\lambda$ interpolates continuously between the pure base policy and maximum fairness. Both the base policy and the QFair critic are frozen and no parameters are updated during deployment.}
    \label{fig:inference}
\end{figure}
With QFair trained offline, the final step is to steer the pretrained policy. At inference time, the agent observes $s_t$ and maintains a running accumulator $\bm{R}_t$ (initialised to $\bm{0}$ at episode start), and computes the shaped action distribution via a softmax combination:
\begin{equation}\label{eq:combined}
  \pi'(a \mid s_t, \bm{R}_t)
  \;=\;
  \mathrm{softmax} \, \Big(f_{\mathrm{base}}(a) + \lambda\, f_{\mathrm{fair}}(a)\Big),
\end{equation}
where the two signal components are $f_{\mathrm{base}}(a)$ which is a standard RL base policy and $f_{\mathrm{fair}}(a) = \bar{Q}_\phi\!\big((s_t, \bm{R}_t),\, a\big)$. To ensure numerical stability and comparable scaling between the two signals, $\bar{Q}_\phi$ denotes the welfare scores normalized to zero-mean and unit-variance across actions.

This softmax formulation mathematically implements the theoretical exponential reweighting derived in Eq.~\eqref{eq:shaped_policy}. Thus, this formulation provides a general framework for policy shaping at test time and is compatible with both value-based and policy gradient methods. In our experiments, we demonstrate the validity of this framework by successfully aligning policies trained with DQN, A2C, and PPO. The full algorithm of inference-time policy shaping is provided in~\Cref{alg:infer}.

\section{Experiments}
\label{sec:experiments}
We evaluate our framework across three domains of increasing complexity and number of objectives. In all environments, fairness is defined at the user (objective) level. During training, standard RL agents optimize the scalar sum of rewards. At inference time, we decompose this scalar reward into its vector components and apply our shaping mechanism to enforce welfare-based fairness. To ensure a comprehensive evaluation, we select domains with varying numbers of objectives and dynamics, making fairness non-trivial. The domains include species conservation, four-room environment, and harvest–regrow. Each domain introduces distinct structural challenges where balancing objectives is essential. Due to space constraints, the environment description and full empirical results for the four-room domain have been deferred to Supplementary Materials~\ref{app:additional_exp}.

\subsection{Setup and Baselines}
We reshape three widely used standard deep RL algorithms, including PPO, A2C, and DQN, covering both policy-based and value-based methods. For each algorithm, we compare (i) standard RL algorithms trained to maximize cumulative scalar reward, (ii) GGF adaptations for PPO, A2C, and DQN that are fairness baselines trained directly to optimize the GGF welfare function following \citet{SiddiqueWengZimmer20}, and (iii) Shaped algorithms for PPO, A2C, and DQN that were trained for reward maximization and aligned toward GGF at inference time via policy shaping. The GGF-based methods serve as true fairness baselines, as they directly optimize the welfare objective during training. In contrast, our shaped agents are trained purely for reward maximization and only steered toward welfare objectives at deployment. All algorithms are evaluated over 20 random seeds in Species Conservation, 10 in Four-Room, and 5 in Harvest–Regrow. Hyperparameters are tuned using Optuna, and full configurations are reported in Appendix~\ref{app:hyper}. 

\begin{figure}[t]
    \centering
    \includegraphics[width=0.8\linewidth]{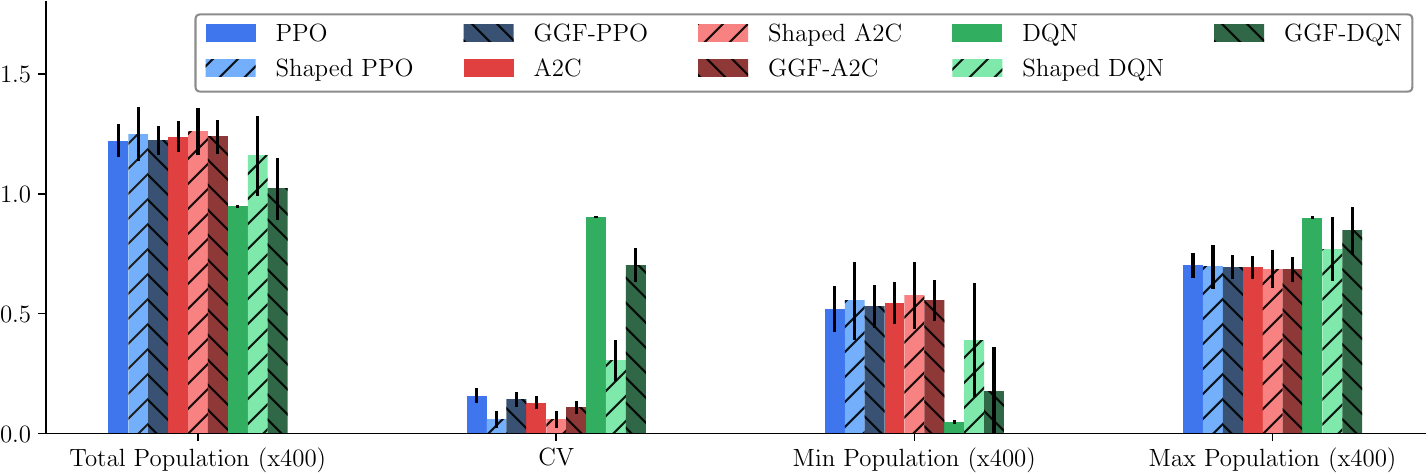}
    \caption{Performance comparison of PPO, A2C, and DQN with their inference-time shaped variants and GGF-trained counterparts in the Species Conservation environment in terms of total reward, CV, min population, and max population }
    \label{fig:sc_bar}
\end{figure}

\subsection{Species Conservation}
Our first domain is a species conservation (SC) environment, which addresses a critical ecological challenge: balancing the populations of two highly interacting endangered species, the sea otter and the northern abalone. Both species are at risk of extinction, requiring sophisticated management strategies to ensure their survival. We adopt the model proposed by~\citep{chades2012setting}, which simulates the predation relationship between the species, where sea otters prey on abalones. 
The state space is composed of the current population sizes of sea otters and northern abalones. The action space includes introducing sea otters, enforcing anti-poaching measures, controlling sea otter populations, implementing a combination of half-antipoaching and half-controlled sea otters, or taking no action. 
The reward function is defined by the population densities of both species, i.e., $\nO=2$. Fairness in this context is interpreted as achieving a balanced distribution of species densities to ensure their preservation. During training, agents optimize the scalar sum of species populations. At inference time, we decompose this sum into its two components and apply policy shaping toward the GGF objective as described in~\Cref{sec:method}.

\Cref{fig:sc_bar} reports total reward, coefficient of variation (CV), minimum population, and maximum population across different standard RL algorithms, GGF-based methods, and our Shaped RL algorithms. Our results show that standard PPO and A2C achieve higher total reward than DQN. As expected, standard RL methods that are trained to maximize reward tend to increase the dominant species population while ignoring the weaker species, resulting in high maximum values and low minimum values. In contrast, GGF-based methods achieve lower CV and higher minimum population levels in comparison to their standard RL algorithms. This means that GGF-based methods achieve more balanced ecological outcomes. Since lower CV corresponds to a more equitable distribution, this confirms that GGF-based training enforces fairness at the expense of some reward optimality. Our shaped agents consistently achieve the lowest CV among all methods while maintaining competitive total reward. In particular, they produce the highest minimum population levels, indicating improved protection of the worst-off species. These results demonstrate that inference-time shaping can yield solutions that are both balanced and performance-preserving.

\Cref{fig:sc_box} presents welfare scores across 100 evaluation trajectories. For each experiment, we compute the empirical average vector return and apply the welfare function to obtain the final score. Our results show that, although GGF-based methods are trained to directly optimize welfare, our shaped agents often achieve comparable or, in some cases, higher welfare scores. This indicates that inference-time shaping effectively approximates welfare optimization without retraining. \Cref{fig:sc_pareto} shows the Pareto trade-off between CV (x-axis) and welfare score (y-axis). Lower CV indicates more balanced distributions. The results show that our shaped RL methods shift the Pareto frontier toward the desirable region of lower inequality and higher welfare. Notably, Shaped-DQN achieves a substantially lower CV than both standard and GGF-trained DQN while maintaining higher welfare. PPO and A2C show similar improvements. These results highlight that inference-time shaping can move pretrained agents toward fairness without modifying learned parameters.

\begin{figure*}[t]
    \centering
    \begin{subfigure}[t]{0.49\linewidth}
        \centering
	     \includegraphics[width=\linewidth]{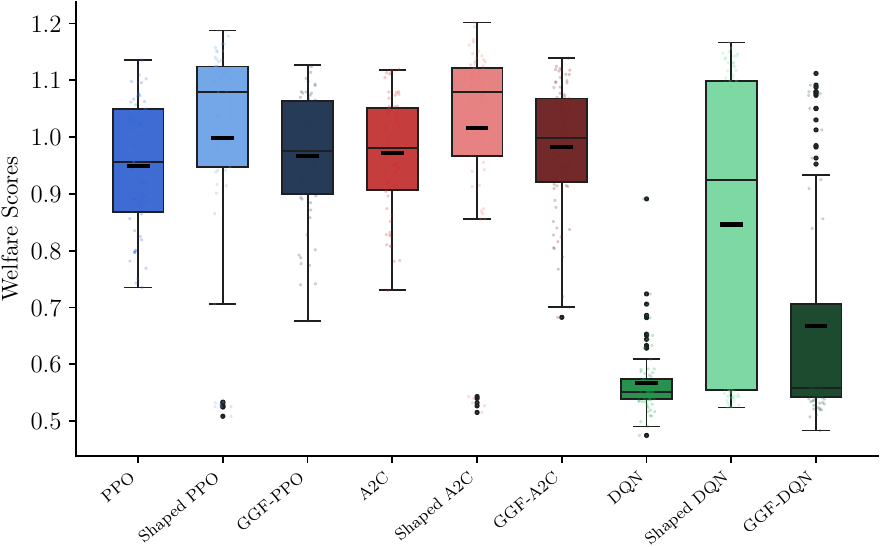}
	      \caption{Welfare scores.}
	      \label{fig:sc_box}
	\end{subfigure}
	\begin{subfigure}[t]{0.49\linewidth}
	    \centering
         \includegraphics[width=\linewidth]{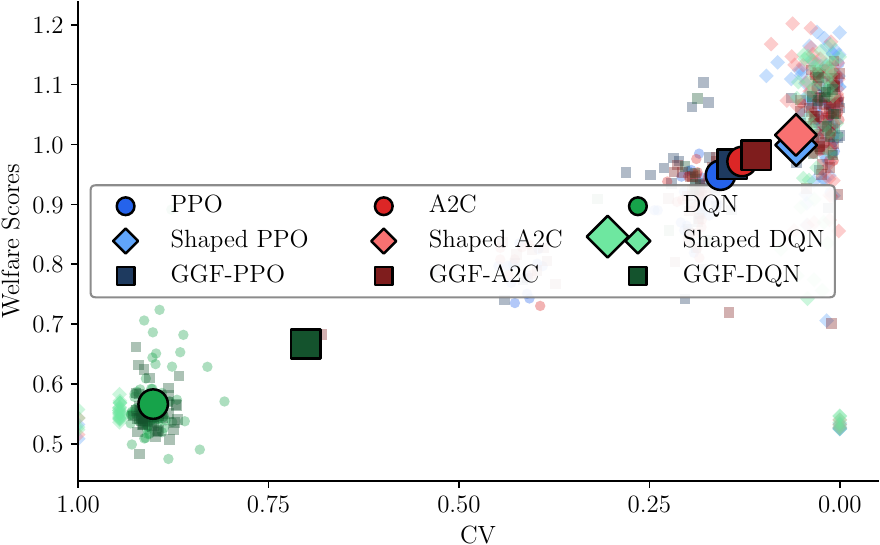}
        \caption{Pareto front plot.}
        \label{fig:sc_pareto}
    \end{subfigure}
    \caption{Performance comparison of PPO, A2C, DQN, their inference-time shaped variants, and GGF-trained counterparts in the Species Conservation.}
    \label{fig:sc}
\end{figure*}

\subsection{Four Room}
Our second domain is a four-room (FR) grid environment, which models a multi-objective navigation and resource-collection problem under spatial constraints and stochastic outcomes. The environment is a $13 \times 13$ maze divided by walls into four connected regions, with multiple spawn locations and three resource types distributed across the map. These resources are represented by distinct shapes and colors: a blue square, a red circle, and a green triangle and the dynamics of these resources are intentionally heterogeneous to create imbalance among objectives. Details of this environment and its empirical results are provided in Supplementary Materials~\ref{app:additional_exp}. 

\subsection{Harvest Regrow}
Our third and final domain is a harvest-regrow (HR) environment, which captures a resource allocation challenge in a spatial ecosystem, where the goal of the agent is to collect resources from multiple renewable but heterogeneous resource types while avoiding strong imbalances across objectives. The environment is a grid of $10 \times 10$ with four crop patches. These crops include apple, melon, berry, and wheat, each with distinct scarcity/value dynamics due to their varying regrowth rates and rewards. In particular, melon yields higher rewards but regrows slowly; wheat provides lower rewards but regenerates quickly; and berry yields are stochastic. In this domain, the agent’s state consists of its current grid position together with four binary harvest indicators specifying whether each crop type has been collected at the current location. The action space comprises the four cardinal movements (up, down, left, right). Rewards are vector-valued with $\nO = 4$, corresponding to per-step crop rewards $[r_{\text{apple}}, r_{\text{melon}}, r_{\text{berry}}, r_{\text{wheat}}]$. Fairness in this setting is interpreted as balanced performance across crop objectives, discouraging policies that over-exploit only the most accessible or highest-value crops. During training, agents optimize the scalarized sum of all crop rewards. At inference time, we decompose the scalar reward into its four components and apply policy shaping toward the GGF objective, as described in~\Cref{sec:method}.

\Cref{tab:hr_results} reports the total reward, CV, minimum crop return, and maximum crop return during the testing phase. The results show that although CV values remain high overall, inference-time shaping consistently reduces inequality relative to the corresponding base algorithms. 
For PPO, A2C, and DQN, the shaped variants achieve lower CV and higher total reward than their standard algorithms. This suggests that inference-time alignment improves fairness without sacrificing efficiency, and in fact yields higher overall returns. Among all methods, GGF-PPO achieves the lowest CV and higher welfare, as expected since it is trained directly to optimize the GGF objective. However, this highest fairness comes at a cost as it has the lowest total reward. In contrast, our shaped methods create a trade-off and learn solutions that are not only fair but also efficient. 

\begin{table}[t]
\centering
\caption{Performance comparison in Harvest-Regrow. 
We report total reward, CV, minimum objective return, and maximum objective return (mean $\pm$ std over evaluation runs).}
\label{tab:hr_results}
\begin{tabular}{lcccc}
\toprule
Algorithm & Total reward & CV & Min crop reward & Max crop reward \\
\midrule
PPO        & 700.32 $\pm$ 9.03   & 0.9997 $\pm$ 0.0003 & 0.01 $\pm$ 0.02 & 700.15 $\pm$ 9.04 \\
Shaped PPO & 804.14 $\pm$ 19.11  & 0.8797 $\pm$ 0.0005 & 0.03 $\pm$ 0.09 & 723.54 $\pm$ 17.25 \\
GGF-PPO    & 483.12 $\pm$ 34.14  & 0.2783 $\pm$ 0.1289 & 52.12 $\pm$ 27.75 & 203.46 $\pm$ 55.35 \\
\midrule
A2C        & 599.80 $\pm$ 155.73 & 0.9995 $\pm$ 0.0004 & 0.01 $\pm$ 0.02 & 599.59 $\pm$ 155.71 \\
Shaped A2C & 694.97 $\pm$ 116.38 & 0.9194 $\pm$ 0.0008 & 0.02 $\pm$ 0.07 & 625.19 $\pm$ 104.67 \\
GGF-A2C    & 613.50 $\pm$ 99.95  & 0.9996 $\pm$ 0.0004 & 0.00 $\pm$ 0.00 & 613.33 $\pm$ 100.04 \\
\midrule
DQN        & 529.89 $\pm$ 30.63  & 0.9461 $\pm$ 0.1115 & 0.09 $\pm$ 0.11 & 505.61 $\pm$ 68.94 \\
Shaped DQN & 638.92 $\pm$ 25.28  & 0.9267 $\pm$ 0.0035 & 0.22 $\pm$ 0.32 & 573.61 $\pm$ 22.83 \\
GGF-DQN    & 618.93 $\pm$ 69.12  & 0.8730 $\pm$ 0.1921 & 0.05 $\pm$ 0.08 & 558.65 $\pm$ 157.50 \\
\bottomrule
\end{tabular}
\end{table}

\section{Conclusions}
In this paper, we address the problem of inference-time fairness alignment in deep RL. By formalizing fairness alignment as a policy shaping problem, we achieve test-time adaptation without retraining the base policy, using our novel history-dependent welfare critic, \textbf{QFair}. We instantiate this multiplicative shaping method across multiple deep RL algorithms and demonstrate through extensive experiments that inference-time policy shaping substantially improves welfare objectives while preserving competitive task performance. In the future, we plan to extend this inference-time shaping framework to accommodate dynamic preferences beyond welfare-based fairness, such as continuous human alignment or interactive human-guided adaptation.

\subsubsection*{Acknowledgments}
\label{sec:ack}
This work was supported by the Office of Naval Research under Grant N000142412405 and the Army Research Office under Grant W911NF2310363.

\appendix





\bibliography{main}
\bibliographystyle{rlj}

\beginSupplementaryMaterials

\section{Theoretical Analysis of Inference-Time Fairness Alignment}
\label{app:theo}
In this appendix, we provide the complete theoretical analysis of our inference-time fairness alignment framework. For clarity and self-containment, we recall the core problem formulation and restate all relevant propositions and theorems from the main text alongside their detailed proofs.

As introduced in Section \ref{sec:problem_statement}, we consider the problem of steering or shaping a pretrained RL policy toward fair outcomes at inference time without updating its learned parameters. The key observation is that, although the base policy was trained with a scalar reward $r=h(\bm r)$, the environment exposes a richer \emph{vector} reward $\bm r\in\mathbb{R}^{\nO}$ that can be leveraged post hoc to optimize a welfare-based fairness objective. Formally, we can write this problem statement as:
\begin{equation} \label{eq:main_problem_app}
   \max_{\pi'} J_{\w}(\pi') = \max_{\pi'} \swf \left(\bm J(\pi') \right), 
\end{equation}
where the expectation is taken with respect to the shaped policy $\pi'$. This means that our goal is to find a shaped policy $\pi'$ that maximizes $J_{\w}(\pi')$ subject to the constraint that $\pi'$ is obtained strictly by reshaping the action distribution of $\pi$ at inference time. However, solving this problem raises some unique challenges. Since $\swf$ is non-linear, it cannot be written as a sum of per-step scalar rewards on the original state space. and the fairness depends on the \emph{cumulative} return vector accrued so far, not only on the current environment state. Our approach solves these challenges through a sequence of principled reductions. First, the core difficulty of non-linearity of $\swf$ is that it is a function of the full trajectory return, which makes it non-decomposable into per-step rewards and non-Markovian in the original state space (\Cref{sec:augmented_mdp_app}).
We address this by constructing an \emph{augmented MDP} $\tM$ that restores the Markov property, then formally connecting the base policy to this augmented setting.
Within $\tM$, we derive the \emph{optimal} shaped policy in closed form as the solution to a KL-constrained welfare maximization problem (\Cref{sec:shaped_policy_app}).
Finally, we describe how this can be estimated via our welfare critic-based approach. 

\subsection{Restoring Markovianity via a Welfare-Augmented MDP}
\label{sec:augmented_mdp_app}
The core difficulty in optimizing $J_{\w}(\pi')$ is that the welfare function $\swf$ is a non-linear function applied to the full trajectory return. Consequently, the objective cannot be decomposed into independent per-step rewards. This means that the welfare-optimal action at time $t$ depends not only on the current state $s_t$ but also on the history of rewards already accrued. For simplicity, next we assume $\gamma=1$ and thus $J$ becomes the cumulative sum of rewards, as denoted as $\bm R$. 

\begin{proposition}[Non-Markovianity of Welfare Optimization]
\label{prop:nonmarkov_app}
When $\swf$ involves sorting (e.g., the GGF~\eqref{eq:welfare}), the welfare-optimal policy is not Markovian in $\St$. There exist states $s$ and distinct reward histories $\bm{R} \neq \bm{R}'$ for which the welfare-optimal actions at $(s, \bm{R})$ and $(s, \bm{R}')$ are different.
\end{proposition}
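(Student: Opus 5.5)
The statement is existential, so I will prove it by constructing an explicit counterexample, formalizing Example~4.1 into a complete finite-horizon MOMDP. Take $\nO=2$ and any GGF weights $w_1>w_2>0$. The MOMDP has a start state, a branching structure that reaches a common state $s$ with one of two accrued vectors $\bm R=(10,2)$ or $\bm R'=(2,10)$, and a terminal step from $s$. Concretely, from $s_0$ one action (or a single stochastic transition) leads to $s$ with reward $(10,2)$, and another leads to $s$ with reward $(2,10)$. At $s$ there are two actions: $a_1$ gives $(1,0)$ and $a_2$ gives $(0,1)$, and both terminate. The horizon is $T=2$. Because the two histories are chosen mirror-symmetric, the same state $s$ is reached with either history.

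The first step is to say precisely what ``welfare-optimal action at $(s,\bm R)$'' means. Conditional on reaching $s$ with accrued vector $\bm R$, the remaining decision affects only the final return $\bm R+\bm r(s,a)$, so the optimal continuation maximizes $\swf(\bm R+\bm r(s,a))$. This is exactly the reduction licensed by the augmented MDP of Theorem~\ref{thm:augmented}: the optimal action at $\tilde s=(s,\bm R)$ maximizes $\widetilde Q^*((s,\bm R),a)=\swf(\bm R+\bm r(s,a))-\swf(\bm R)$ at the final step. For a deterministic history, this coincides with trajectory-level welfare.

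The second step is the direct computation. At $\bm R=(10,2)$, action $a_2$ gives $3w_1+10w_2$ and $a_1$ gives $2w_1+11w_2$. The difference is $w_1-w_2>0$, so $a_2$ is the unique optimum. By the symmetry of $\swf$ under swapping coordinates, at $\bm R'=(2,10)$ the roles of the actions swap, so $a_1$ is the unique optimum. It follows that no policy depending only on $s$ can be optimal in both situations, and hence the welfare-optimal policy is not Markovian in $\St$.

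The main obstacle is stating this cleanly for the welfare-of-expected-return objective $\swf(\bm J(\pi))$ rather than for the per-trajectory welfare $\phi_{\w}(\bm R_T)$. If the branch at $s_0$ is stochastic, the optimal policy for $\swf(\mathbb E[\bm R_T])$ couples the two branches. I would first try to sidestep this by making the branch a choice of the agent, or by analyzing each deterministic history separately, so that the expectation is trivial and the two objectives coincide. If that is insufficient, I would instead check that in the stochastic-branch case the symmetric policy choosing $a_2$ after $(10,2)$ and $a_1$ after $(2,10)$ yields a strictly larger $\swf(\mathbb E[\bm R_T])$ than either Markov choice. With branch probabilities $1/2$, the Markov choices give $(7,6)$ or $(6,7)$, while the history-dependent policy gives $(6.5,6.5)$, and $\swf(6.5,6.5)>\swf(6,7)$ because $w_1>w_2$.
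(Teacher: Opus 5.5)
Your main argument is the paper's own proof. The paper gives nothing beyond Example~4.1: the computation $\swf(10,3)-\swf(11,2)=w_1-w_2>0$ and its mirror image at $\bm R'=(2,10)$, read as a statement about the optimal action at the augmented state $(s,\bm R)$. Your first two steps reproduce exactly this. Be aware of what it establishes. In your agent-choice variant, the Markov policy that takes the $(10,2)$-action at $s_0$ and then $a_2$ at $s$ attains the best achievable trajectory welfare $3w_1+10w_2$ from $s_0$. So what is shown is only that no policy depending on $s$ alone is optimal from \emph{every} augmented state $(s,\bm R)$. This is the same conditional sense in which the paper's claim holds.

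The genuine problem is your fallback for the objective $\swf(\bm J(\pi))$. You compared the history-dependent policy only with the two \emph{deterministic} Markov choices. The randomized Markov policy with $\pi(a_1\mid s)=\tfrac12$ also gives $\tfrac12\big[(10.5,2.5)+(2.5,10.5)\big]=(6.5,6.5)$. No better example can repair this. Given any history-dependent $\pi$, define $\pi_M(a\mid s,t):=\Pr^{\pi}(a_t=a\mid s_t=s)$. By induction on $t$, $\pi_M$ has the same marginal law of $(s_t,a_t)$ at every $t$, hence the same $\bm J$ and the same $\swf(\bm J)$. So for the welfare of expected returns, history dependence never raises the optimal value once randomization is allowed. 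The proposition can then only be read in the conditional sense above, or for deterministic policies, which is all your fallback actually shows.

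If you want a strict global separation, use the per-trajectory objective $\Expect[\swf(\bm R_T)]$, which is what the return of $\tM$ equals by Theorem~\ref{thm:augmented}. Let the two histories each arise with probability $\tfrac12$. Every Markov policy, randomized or not, gets $\tfrac12(5w_1+21w_2)$, while the history-dependent policy gets $3w_1+10w_2$, larger by $\tfrac12(w_1-w_2)$.

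Finally, a single stochastic transition out of $s_0$ cannot carry two different rewards, since $\bm r=\bm r(s,a)$. Give $s_0$ zero reward and route the two branches through intermediate states $u_1,u_2$ with $\bm r(u_1,\cdot)=(10,2)$ and $\bm r(u_2,\cdot)=(2,10)$, both transitioning to $s$; this makes the horizon $T=3$.
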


\begin{example}
\textit{To illustrate Proposition~\ref{prop:nonmarkov_app}, we consider an MDP with $\nO = 2$ objectives and GGF weights $w_1 > w_2 > 0$. As seen in~\Cref{fig:example}, at state $s_{t-1}$, action $a_1$ yields $\bm{r}_1 = (1,0)$ and $a_2$ yields $\bm{r}_2 = (0,1)$, both leading to a terminal state $s_t$. If the accrued reward in this trajectory is $\bm{R} = (10,2)$, then $a_2$ produces a final return of $(10,3)$, which sorts $(3,10)$ yielding a welfare score of $3w_1 + 10w_2$. Action $a_1$ produces $(11,2)$, yielding a welfare score of $2w_1 + 11w_2$. Since $w_1 > w_2$ and the difference is $w_1 - w_2 > 0$, $a_2$ is optimal. Conversely, if $\bm{R}' = (2,10)$, the same analysis shows $a_1$ is optimal. Thus, the optimal action at the \emph{same state} $s$ reverses depending on cumulative reward $\bm{R}$.
}
\end{example}

To restore the Markov property, we enrich the state representation with the cumulative reward vector and define a welfare-augmented MDP.
\begin{definition}[Welfare-Augmented MDP]
\label{def:augmented_app}
Given an MOMDP $\mathcal{M} = (\St, \Ac, P, \bm{r}, T, \gamma)$ and welfare function $\swf$, define the undiscounted finite-horizon MDP $\tM = (\tS, \Ac, \widetilde{P}, \tilde{r}, T)$ where the state space is augmented with the cumulative reward vector, $\tS = \St \times \mathbb{R}^\nO$, providing an augmented state is $\tilde{s}_t = (s_t, \bm{R}_t)$. The transition dynamics update the accumulator deterministically: $ \widetilde{P}\big(\tilde{s}' \mid \tilde{s}, a)\big) = \widetilde{P}((s', \mathbf{R}') \mid (s, \mathbf{R}), a) = P(s' \mid s, a) \cdot \mathbf{1}[\mathbf{R}' = \mathbf{R} + \mathbf{r}(s,a)]$. The scalar reward in $\tM$ is defined as the marginal welfare contribution:
\begin{equation}\label{eq:marginal_reward_app}
    \tilde{r}\big((s, \bm{R}),\, a\big)
    \;=\;
    \phi_{\bm{w}}\!\big(\bm{R} + \bm{r}(s,a)\big) - \phi_{\bm{w}}(\bm{R}).
  \end{equation}
\end{definition}

\begin{theorem}[Augmented MDP Equivalence]
\label{thm:augmented_app}
$\tM$ is Markovian, and for any policy $\tilde{\pi}$ on $\tS$:
\begin{equation}\label{eq:telescoping_app}
  \Expect_{\tilde{\pi}}\!\left[\sum_{t=0}^{T-1} \tilde{r}(\tilde{s}_t, a_t)\right]
  \;=\;
  \Expect_{\tilde{\pi}}\!\big[\phi_{\bm{w}}(\bm{R}_T)\big] - \phi_{\bm{w}}(\bm{0}).
\end{equation}
Hence, maximizing the standard scalar return in $\tM$ is equivalent to maximizing the welfare fairness objective $J_{\w}(\pi')$.
\end{theorem}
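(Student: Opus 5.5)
The proof has three parts, taken in order: Markovianity of $\tM$, the telescoping identity, and the equivalence of the two maximization problems.

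\emph{Markovianity.} Both primitives of $\tM$ depend only on the current augmented state and action. The kernel $\widetilde{P}((s',\bm R')\mid(s,\bm R),a)=P(s'\mid s,a)\,\mathbf{1}[\bm R'=\bm R+\bm r(s,a)]$ is a product of the Markov kernel $P$ and a deterministic update that uses only $(s,\bm R,a)$. The reward $\tilde r((s,\bm R),a)$ is likewise a function of $(s,\bm R,a)$ alone. So, conditioned on $\tilde s_t$ and $a_t$, the law of $(\tilde s_{t+1},\tilde r_t)$ is independent of the earlier history $\tilde s_0,a_0,\dots,\tilde s_{t-1},a_{t-1}$. I would also check that $\widetilde P(\cdot\mid\tilde s,a)$ is a valid probability measure: its mass is carried by the graph $\{(s',\bm R+\bm r(s,a))\}$, and there it integrates to one because $P$ does.

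\emph{Telescoping.} The initial augmented state is $(s_0,\bm 0)$. Along any trajectory generated by $\tilde\pi$ in $\tM$, the accumulator satisfies $\bm R_{t+1}=\bm R_t+\bm r(s_t,a_t)$ almost surely. Induction on $t$ then gives $\bm R_t=\sum_{k<t}\bm r(s_k,a_k)$, so $\bm R_T$ is exactly the undiscounted return vector. Substituting gives
\[\tilde r(\tilde s_t,a_t)=\phi_{\bm w}(\bm R_{t+1})-\phi_{\bm w}(\bm R_t)\quad\text{a.s.},\]
and summing over $t=0,\dots,T-1$ telescopes pathwise to $\phi_{\bm w}(\bm R_T)-\phi_{\bm w}(\bm 0)$. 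Taking expectations under $\tilde\pi$ yields \eqref{eq:telescoping_app}. Because the horizon is finite, and bounded rewards keep every term integrable, linearity of expectation applies without any convergence argument.

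\emph{Equivalence.} Since $\phi_{\bm w}(\bm 0)$ is a constant, maximizing the scalar return of $\tM$ over policies on $\tS$ is the same as maximizing $\Expect_{\tilde\pi}[\phi_{\bm w}(\bm R_T)]$.

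The main obstacle is this last step. The paper defines $J_{\w}(\pi')=\phi_{\bm w}(\bm J(\pi'))=\phi_{\bm w}(\Expect[\bm R_T])$, the welfare of the expected return, whereas the telescoped quantity is $\Expect[\phi_{\bm w}(\bm R_T)]$, the expected welfare. Because GGF is concave, Jensen's inequality gives only
\[\Expect[\phi_{\bm w}(\bm R_T)]\le\phi_{\bm w}(\Expect[\bm R_T]),\]
and the two coincide when $\bm R_T$ is almost surely constant, which holds under deterministic dynamics and deterministic policies. I would therefore state the equivalence precisely as maximizing the expected welfare of the realized cumulative return, that is, the trajectory-level welfare. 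I would then note that this quantity lower-bounds $J_{\w}$ and equals it in the deterministic case; this is what the subsequent results actually use. If exact equality with $\phi_{\bm w}(\bm J)$ is required in general, I would add it as an explicit assumption of degenerate return distributions. Proposition~\ref{prop:traj_equiv} is not needed for this theorem.
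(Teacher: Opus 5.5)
Your Markovianity and telescoping arguments match the paper's proof, which consists of exactly those two observations; you write them out with more care (the Dirac-kernel check, $\bm R_0=\bm 0$, the almost-sure accumulator identity).

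Where you depart from the paper is the final ``Hence'' clause, and there you are right. The paper's proof stops at the telescoping identity and silently identifies $\Expect_{\tilde\pi}[\phi_{\bm w}(\bm R_T)]$ with $J_{\w}(\pi')=\phi_{\bm w}(\bm J(\pi'))$. Yet \Cref{sec:fairness} explicitly insists that expected welfare and welfare of expected return differ. GGF equals $\min_\sigma \sum_i w_i u_{\sigma(i)}$ over permutations $\sigma$, so it is concave, and Jensen gives only $\Expect[\phi_{\bm w}(\bm R_T)]\le\phi_{\bm w}(\Expect[\bm R_T])$. Maximizing a lower bound is not the same argmax problem. Restricting the equivalence to the expected (trajectory-level) welfare, as you propose, is therefore the correct repair.

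Two small refinements to your caveat:
\begin{itemize}
\item Equality also holds whenever a single permutation sorts every realization of $\bm R_T$, because GGF is linear on each ordering cone. So almost-sure constancy is sufficient but not necessary.
\item Your deterministic case additionally needs a deterministic initial state, since the paper allows $s_0\sim\rho$.
\end{itemize}

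The same caveat applies to the paper's later identity $J_{\w}(\pi')-J_{\w}(\pi)=\widetilde J(\tilde\pi')-\widetilde J(\tilde\pi)$. That identity is valid only with expected welfare on the left-hand side.
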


\begin{proof}
Markovianity: $\tilde{r}$ and $\widetilde{P}$ depend on the current augmented state $\tilde{s}_t = (s_t, \bm{R}_t)$ and action $a_t$ only.
Equivalence: $\sum_t \tilde{r}(\tilde{s}_t, a_t) = \sum_t [\phi_{\bm{w}}(\bm{R}_{t+1}) - \phi_{\bm{w}}(\bm{R}_t)] = \phi_{\bm{w}}(\bm{R}_T) - \phi_{\bm{w}}(\bm{0})$ by telescoping.
\end{proof}

The augmented MDP reduces the non-Markovian welfare problem to a standard scalar-reward MDP.
We define the welfare value functions in $\tM$:
\begin{equation}\label{eq:welfare_value_app}
  \widetilde{V}^{\tilde{\pi}}(\tilde{s})
  = \Expect_{\tilde{\pi}}\!\left[\sum_{k=t}^{T-1}\tilde{r}_k \;\middle|\; \tilde{s}_t = \tilde{s}\right],
  \quad
  \widetilde{Q}^{\tilde{\pi}}(\tilde{s}, a)
  = \tilde{r}(\tilde{s}, a) + \Expect_{s'}[\widetilde{V}^{\tilde{\pi}}(\tilde{s}')],
\end{equation}
and the welfare advantage $\widetilde{A}^{\tilde{\pi}}(\tilde{s}, a) = \widetilde{Q}^{\tilde{\pi}}(\tilde{s}, a) - \widetilde{V}^{\tilde{\pi}}(\tilde{s})$.

\subsection{Anchoring on the base policy via policy lifting}
Since the base policy $\pi$ is trained in the standard MDP with scalar reward, but its shaped policy $\pi'$ must operate in $\tM$, we therefore define its canonical embedding into the augmented state space. Note that both of these MDPs share the same dynamics and action space but differ fundamentally in state representation and objective. To reason about $\pi$ within $\tM$, its canonical embedding is defined as below.

\begin{definition}[Lifting] 
The lifted policy $\tilde\pi$ on $\widetilde{\St}$ is $ \tilde{\pi}(a \mid \tilde{s}) \;:=\; \pi(a \mid s)$ for all $\tilde{s} = (s, \bm{R}) \in \tS,\; a \in \Ac$.
\end{definition}
While $\tilde{\pi}$ is generally a suboptimal policy in $\tM$ (because it ignores the $\bm{R}$-component necessary for welfare optimality), it perfectly preserves the base policy's interaction with the environment.

\begin{proposition}[Trajectory Equivalence]
\label{prop:traj_equiv_app}
Running $\pi$ in original MDP $\mathcal{M}$ and running $\tilde{\pi}$ in augmented MDP $\tM$ induce same distributions over base-state trajectories $\tau = (s_0, a_0, s_1, a_1, \ldots, s_T)$. That is, $\Pr^\pi_M(\tau) \;=\; \Pr^{\tilde{\pi}}_{\tM}(\tau)$, where $\Pr(\tau)$ is the probability of an entire trajectory $\tau$ occurring.
\end{proposition}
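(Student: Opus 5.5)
The plan is to write both trajectory probabilities as products via the chain rule and compare them factor by factor. Fix a base-state trajectory $\tau = (s_0, a_0, s_1, \ldots, s_T)$. In $\mathcal{M}$ under $\pi$,
\[
\Pr^\pi_{\Md}(\tau) = \rho_0(s_0)\prod_{t=0}^{T-1} \pi(a_t \mid s_t)\, P(s_{t+1}\mid s_t,a_t),
\]
where $\rho_0$ is the initial-state distribution. In $\tM$ the initial augmented state is $(s_0,\bm 0)$ with $s_0\sim\rho_0$, so the accumulator starts deterministically at $\bm R_0=\bm 0$. By $\Pr^{\tilde\pi}_{\tM}(\tau)$ we mean the marginal over base components of the law of the augmented trajectory $(\tilde s_0,a_0,\ldots,\tilde s_T)$. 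This marginal is obtained by summing (or integrating) over the accumulator values $\bm R_1,\ldots,\bm R_T$.

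The first key step is to show that, given $\tau$, the accumulator sequence is uniquely determined. We prove by induction on $t$ that the only accumulator values with nonzero transition mass are
\[
\bm R_t = \sum_{k<t}\bm r(s_k,a_k).
\]
The base case is $\bm R_0=\bm 0$. For the inductive step, the indicator $\mathbf 1[\bm R_{t+1}=\bm R_t+\bm r(s_t,a_t)]$ in $\widetilde P$ kills every other value. Hence the marginal collapses to a single augmented trajectory $\tilde\tau(\tau)$ with deterministic accumulators, and its probability is
\[
\rho_0(s_0)\prod_{t}\tilde\pi(a_t\mid (s_t,\bm R_t))\,P(s_{t+1}\mid s_t,a_t)\cdot 1.
\]

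The second step uses the lifting definition, $\tilde\pi(a_t\mid(s_t,\bm R_t))=\pi(a_t\mid s_t)$. With this, every factor matches the corresponding factor in the $\mathcal{M}$ expression, and the two probabilities are equal.

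The main subtlety is measure-theoretic. Since $\bm R\in\mathbb{R}^\nO$, the indicator is really a Dirac kernel $\delta_{\bm R+\bm r(s,a)}$, and "summing over accumulators" must be phrased as integrating the deterministic kernel. I would handle this by noting that the accumulator is a measurable function of the base trajectory prefix. The push-forward of $\Pr^{\tilde\pi}_{\tM}$ under the projection $\tilde\tau\mapsto\tau$ then equals the law obtained by composing the base kernels with $\pi$.

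If the state space $\St$ is finite or countable, the reachable accumulator values are countable. The argument is then a plain sum with exactly one nonzero term, which is the version I would present, with a remark for the general case.
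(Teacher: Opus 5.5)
Your proposal is correct and is essentially the paper's proof: both write the trajectory law as $\rho(s_0)\prod_{t}\pi(a_t\mid s_t)\,P(s_{t+1}\mid s_t,a_t)$, using the lifting $\tilde\pi(a_t\mid \tilde s_t)=\pi(a_t\mid s_t)$ and the fact that $\widetilde P$ marginalises to $P$ because the accumulator update is deterministic. Your induction on $\bm R_t$ and your push-forward remark for real-valued accumulators spell out in detail what the paper states in one line.
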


\begin{proof}
Both trajectory probabilities factor as $\rho(s_0)\prod_{t=0}^{T-1}\pi(a_t \mid s_t)\,P(s_{t+1} \mid s_t, a_t)$. The lifting gives $\tilde{\pi}(a_t \mid \tilde{s}_t) = \pi(a_t \mid s_t)$, and $\widetilde{P}$ marginalises to $P(s' \mid s, a)$ since the $\bm{R}$-update is deterministic.
\end{proof}

A direct and powerful consequence of Proposition~\ref{prop:traj_equiv_app} is that rollouts collected by the base policy can be \emph{directly reused} for offline training in $\tM$. We simply annotate each transition with the cumulative reward vector $\bm{R}_t$ (computed post-hoc from the recorded vector rewards) and the marginal welfare reward~\eqref{eq:marginal_reward_app}.
Furthermore, although $\pi$ was trained to maximize $J(\pi)$, it induces a well-defined welfare $J_{\w}(\pi)$ that provides a guaranteed performance baseline. The following result quantifies why a good base policy is a good starting point for welfare improvement.

\begin{proposition}[Welfare Floor]
\label{prop:welfare_floor_app}
If the training aggregation is the utilitarian sum ($h = \bm{1}^\top$), then for any policy $\pi$:
\begin{equation}\label{eq:welfare_floor_app}
  \J_{\w}(\pi) \;\ge\; w_\nO \cdot \bm{J}(\pi),
\end{equation}
where $w_\nO > 0$ is the smallest GGF weight.
\end{proposition}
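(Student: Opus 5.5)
We prove the welfare floor by bounding the GGF from below pointwise by a multiple of the sum of its arguments, and then evaluating this bound at the expected return vector. We read the statement as $J_{\w}(\pi) = \swf(\bm J(\pi)) \ge w_\nO \sum_i J_i(\pi) = w_\nO J(\pi)$. Here $J_i(\pi)$ is the expected return of objective $i$, and the last equality uses linearity of expectation together with $h = \bm{1}^\top$, so that $J(\pi) = \Expect_\pi[\sum_t \bm{1}^\top \bm r(s_t,a_t)] = \bm{1}^\top \bm J(\pi)$.

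\emph{Step 1 (pointwise bound).} For any $\bm u \in \mathbb{R}^\nO$, write $\swf(\bm u) = \sum_i w_i u_i^{\uparrow}$. Since $\bm w$ is strictly decreasing and positive, $w_i \ge w_\nO$ for every $i$. Decompose
\[
\swf(\bm u) = w_\nO \sum_{i=1}^{\nO} u_i^{\uparrow} + \sum_{i=1}^{\nO} (w_i - w_\nO)\, u_i^{\uparrow}.
\]
The first term equals $w_\nO \bm{1}^\top \bm u$, because sorting only permutes the entries. The second term has nonnegative coefficients, so it is nonnegative whenever $\bm u \ge \bm 0$ componentwise, and then $\swf(\bm u) \ge w_\nO \bm 1^\top \bm u$.

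\emph{Step 2 (apply to expected returns).} Set $\bm u = \bm J(\pi)$. Then $\bm 1^\top \bm J(\pi) = J(\pi)$ gives the claim.

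\emph{Main obstacle: the sign condition in Step 1.} The second term is nonnegative only if the sorted utilities are nonnegative. If some objective has a negative expected return, the inequality can fail. For example, with $\nO = 2$ and $\bm u = (-1, 0)$, we get $\swf(\bm u) = -w_1 < -w_2 = w_2 \bm 1^\top \bm u$. I would therefore make explicit the hypothesis that the per-objective rewards are nonnegative, which holds for population densities and crop rewards in the paper's domains. Under this hypothesis $\bm J(\pi) \ge \bm 0$ componentwise, since each component is an expectation of a sum of nonnegative rewards (with $\gamma = 1$).

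A cleaner way to state the same argument is through Abel summation. Writing $w_{\nO+1} := 0$,
\[
\swf(\bm u) = \sum_{k=1}^{\nO} (w_k - w_{k+1}) \sum_{i\le k} u_i^{\uparrow}.
\]
Each coefficient $w_k - w_{k+1}$ is nonnegative. Each partial sum $\sum_{i \le k} u_i^{\uparrow}$ is also nonnegative, and for $k = \nO$ it equals $\bm 1^\top \bm u$. Keeping only the $k = \nO$ term, whose coefficient is $w_\nO$, gives the same bound. In the write-up I would note the nonnegativity assumption and otherwise treat it as implicit in the statement.
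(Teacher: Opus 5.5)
Your argument is the same as the paper's: bound every $w_i$ below by $w_\nO$, use that sorting preserves $\sum_i u_i^{\uparrow} = \bm{1}^\top \bm u$, and evaluate at $\bm J(\pi)$ using $J(\pi) = \bm 1^\top \bm J(\pi)$. Keep your caveat, because it is correct and the paper misses it: the paper's one-line proof uses $\sum_i w_i u_i^{\uparrow} \ge w_\nO \sum_i u_i^{\uparrow}$ without noting that this needs $\bm J(\pi) \ge \bm 0$ componentwise, and your example $\bm u = (-1,0)$ shows the statement is false without that hypothesis.
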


\begin{proof}
By definition of the GGF, $\swf(\bm{u}) = \sum_i w_i u^\uparrow_i \ge w_\nO \sum_i u^\uparrow_i$,  applying it to $\J_{\w}(\pi) = \swf \left( \bm J(\pi) \right) = \sum_{i=1}^N w_i J_i(\pi)^\uparrow \ge w_N \sum_{i=1}^N J_i(\pi)^\uparrow = w_\nO \bm{J}(\pi)$ yield the result.
\end{proof}

The bound reveals a separation of concerns, which means a trained base policy contributes \emph{environmental competence} (high total return $J(\pi)$) to a welfare floor. What $\pi$ lacks is equity. By combining trajectory equivalence with Theorem~\ref{thm:augmented_app}, we can characterize the welfare improvement achievable by any shaped policy $\pi'$:
\begin{equation}\label{eq:welfare_gap_app}
  \J_{\w}(\pi') - \J_{\w}(\pi) =
  \widetilde{J}(\tilde{\pi}') - \widetilde{J}(\tilde{\pi}),
\end{equation}
where $\widetilde{J}(\cdot)$ denotes the expected return in $\tM$.
This identity provides a formal bridge, which is \emph{improving fairness over the base policy is mathematically equivalent to a standard policy improvement problem over the lifted policy in $\tM$}.

\subsection{Optimal KL-Regularized Inference-Time Policy Shaping}
\label{sec:shaped_policy_app}
We now derive the shaped policy as the optimal solution to a KL-regularized optimization problem in $\tM$. The KL-regularization serves two purposes. First, it keeps the shaped policy $\pi'$ close to the original policy $\pi$, and second, it ensures the surrogate objective remains a valid approximation to the true welfare gain.
Solving this optimization problem and using the performance difference lemma~\citep{10.5555/645531.656005} applied to $\tM$, we can express the welfare gap as $\widetilde{J}(\tilde{\pi}') - \widetilde{J}(\tilde{\pi}) = T \Expect_{\tilde{s} \sim d^{\tilde{\pi}'}}\!
\Big[\widetilde{A}^{\tilde{\pi}}(\tilde{s}, a)\Big]$. Since $d^{\tilde{\pi}'}$ depends on the unknown policy, we replace it with $d^{\tilde{\pi}}$ to obtain a tractable welfare surrogate that shares the same gradient at $\tilde{\pi}' = \tilde{\pi}$ and provides a local lower bound on the true improvement under a KL penalty~\citep{SchulmanLevineAbbeelJordanMoritz15}. Therefore, to maximize this objective while preserving the environmental competence guaranteed by the welfare floor, we restrict the shaped policy to a KL-divergence trust region around $\tilde{\pi}$:
\begin{align}\label{eq:kl_opt_app}
  \max_{\tilde{\pi}'} \quad
  & T\;\Expect_{\tilde{s} \sim d^{\tilde{\pi}}}
    \!\Big[\widetilde{A}^{\tilde{\pi}}(\tilde{s}, a)\Big] \\
  \text{s.t.} \quad
  & \Expect_{\tilde{s} \sim d^{\tilde{\pi}}}
    \Big[\KL\!\big(\tilde{\pi}'(\cdot|\tilde{s})
    \,\big\|\, \tilde{\pi}(\cdot|\tilde{s})\big)\Big]
    \le \varepsilon,
  \qquad
  \tilde{\pi}'(\cdot|\tilde{s}) \in \Delta(\Ac) \;\;\forall\, \tilde{s}.
\end{align}

\begin{theorem}[Optimal Shaped Policy]
\label{thm:shaped_app}
The unique solution to~\eqref{eq:kl_opt_app} is:
\begin{equation}\label{eq:shaped_policy_app}
  \boxed{
    \pi'(a \mid s, \bm{R})
    \;=\;
    \frac{
      \pi(a \mid s)\;\exp\!\Big(\frac{1}{\beta}\,\widetilde{Q}^{\tilde{\pi}}\!\big((s,\bm{R}),\, a\big)\Big)
    }{
      \displaystyle\sum_{a'} \pi(a' \mid s)\;\exp\!\Big(\frac{1}{\beta}\,\widetilde{Q}^{\tilde{\pi}}\!\big((s,\bm{R}),\, a'\big)\Big)
    }
  }
\end{equation}
where $\beta > 0$ is the Lagrange multiplier for the KL constraint, determined by $\varepsilon$. The shaped policy $\pi'$ is computed directly from $\pi$ and the welfare value function $\widetilde{Q}^{\tilde{\pi}}$ with no iterative update.
\end{theorem}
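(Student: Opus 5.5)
The plan is to observe that problem~\eqref{eq:kl_opt_app} decouples across augmented states once the state distribution is frozen at $d^{\tilde{\pi}}$, and then to solve each per-state problem by a Lagrangian / Gibbs variational argument.

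\textbf{Step 1: reduce the objective to a $Q$-maximization.} Write the objective explicitly with the action drawn from the candidate policy:
\[
T\,\Expect_{\tilde{s}\sim d^{\tilde{\pi}}}\Big[\textstyle\sum_a \tilde{\pi}'(a\mid\tilde{s})\,\widetilde{A}^{\tilde{\pi}}(\tilde{s},a)\Big].
\]
Since $\widetilde{A}^{\tilde{\pi}}=\widetilde{Q}^{\tilde{\pi}}-\widetilde{V}^{\tilde{\pi}}(\tilde{s})$ and $\sum_a\tilde{\pi}'(a\mid\tilde{s})=1$, the $\widetilde{V}^{\tilde{\pi}}$ term is a constant independent of $\tilde{\pi}'$. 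So it suffices to maximize $\Expect_{d^{\tilde{\pi}}}\big[\sum_a \tilde{\pi}'\,\widetilde{Q}^{\tilde{\pi}}\big]$.

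\textbf{Step 2: form the Lagrangian.} Introduce a multiplier $\beta>0$ for the KL constraint (scaling by $T$ is absorbed into $\beta$). The constraint set is convex and the objective is linear in $\tilde{\pi}'$. Slater's condition holds, since $\tilde{\pi}'=\tilde{\pi}$ gives KL $=0<\varepsilon$. Hence strong duality applies, and for the right $\beta$ the constrained optimum maximizes
\[
\Expect_{d^{\tilde{\pi}}}\Big[\textstyle\sum_a\tilde{\pi}'\widetilde{Q}^{\tilde{\pi}}-\beta\,\KL(\tilde{\pi}'\|\tilde{\pi})\Big].
\]
This expression is a $d^{\tilde{\pi}}$-weighted sum of independent per-state terms, so it is maximized pointwise at each $\tilde{s}$.

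\textbf{Step 3: solve each per-state problem.} Define
\[
q(a)\propto\tilde{\pi}(a\mid\tilde{s})\exp\!\big(\widetilde{Q}^{\tilde{\pi}}(\tilde{s},a)/\beta\big),
\qquad Z=\textstyle\sum_a\tilde{\pi}(a\mid\tilde{s})\exp\!\big(\widetilde{Q}^{\tilde{\pi}}(\tilde{s},a)/\beta\big).
\]
Then the identity
\[
\textstyle\sum_a p(a)\,\widetilde{Q}^{\tilde{\pi}}(\tilde{s},a)-\beta\,\KL(p\|\tilde{\pi})=\beta\log Z-\beta\,\KL(p\|q)
\]
holds for every $p\in\Delta(\Ac)$. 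By Gibbs' inequality the right-hand side is maximized uniquely at $p=q$; alternatively, set the gradient of the Lagrangian with a simplex multiplier to zero. Substituting the lifting $\tilde{\pi}(a\mid\tilde{s})=\pi(a\mid s)$ yields exactly~\eqref{eq:shaped_policy_app}.

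\textbf{Step 4: uniqueness and the choice of $\beta$.} Uniqueness follows from strict convexity of the KL in its first argument. The value of $\beta$ is fixed by complementary slackness. If $\widetilde{Q}^{\tilde{\pi}}$ is non-constant in $a$ on a set of positive $d^{\tilde{\pi}}$-mass, the constraint is active. In that case the map $\beta\mapsto\Expect\,\KL(q_\beta\|\tilde{\pi})$ is continuous and decreasing, going from its maximal value to $0$, so a unique $\beta$ matches $\varepsilon$.

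\textbf{Main obstacle.} The hardest part is this last step, specifically the degenerate cases. If $\widetilde{Q}^{\tilde{\pi}}$ is constant in $a$, every feasible policy is optimal and uniqueness holds only at the boundary. If $\varepsilon$ exceeds the KL of the greedy policy, the constraint is slack and no $\beta>0$ is active. There is also a subtlety at states outside the support of $d^{\tilde{\pi}}$, where the problem does not constrain $\tilde{\pi}'$. I would handle these by stating uniqueness $d^{\tilde{\pi}}$-almost everywhere under the active-constraint assumption, and by taking the formula as the canonical extension off-support.
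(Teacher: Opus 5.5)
Your proposal is correct and follows essentially the paper's route: relax the KL constraint with a Lagrange multiplier, decouple per augmented state, exponentially tilt $\pi(\cdot\mid s)$, cancel the $\widetilde{V}^{\tilde{\pi}}$ term, get uniqueness from strict convexity of the KL, and fix $\beta$ through the monotone KL curve. The differences are minor and in your favour: you solve the per-state problem via the identity $\sum_a p(a)\widetilde{Q}^{\tilde{\pi}}(\tilde{s},a)-\beta\,\KL(p\|\tilde{\pi})=\beta\log Z-\beta\,\KL(p\|q)$, which gives global optimality and excludes actions outside $\mathrm{supp}\,\pi$ directly, rather than the paper's first-order stationarity conditions; you justify the Lagrangian step via Slater; and you correctly flag the slack-constraint, constant-$\widetilde{Q}^{\tilde{\pi}}$ and off-support degenerate cases, which the paper passes over with ``in practice $\alpha>0$''.
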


\begin{proof}
The proof proceeds in four parts: (i) form the Lagrangian, (ii) solve pointwise for each $\tilde{s}$, (iii) replace the advantage with the Q-function, and (iv) verify uniqueness and KKT conditions.

\medskip
\noindent\textbf{Part 1: Lagrangian.}

We introduce a dual variable $\alpha \ge 0$ for the KL constraint and per-state multipliers $\zeta(\tilde{s})$ for the normalisation constraints $\sum_a \tilde{\pi}'(a|\tilde{s}) = 1$.
The non-negativity constraints $\tilde{\pi}'(a|\tilde{s}) \ge 0$ will be satisfied automatically by the form of the solution.
The Lagrangian is:
\begin{align}
  \mathcal{L}(\tilde{\pi}', \alpha, \zeta)
  &= \Expect_{\tilde{s} \sim d^{\tilde{\pi}}}\!\left[\sum_a \tilde{\pi}'(a|\tilde{s})\,\widetilde{A}^{\tilde{\pi}}(\tilde{s}, a)\right] \nonumber\\
  &\quad - \alpha\left(\Expect_{\tilde{s} \sim d^{\tilde{\pi}}}\!\left[\sum_a \tilde{\pi}'(a|\tilde{s})\log\frac{\tilde{\pi}'(a|\tilde{s})}{\tilde{\pi}(a|\tilde{s})}\right] - \varepsilon\right) \nonumber\\
  &\quad + \sum_{\tilde{s}} \zeta(\tilde{s})\left(1 - \sum_a \tilde{\pi}'(a|\tilde{s})\right). \label{eq:lagrangian}
\end{align}
Expanding the KL divergence and writing $p(a) = \tilde{\pi}'(a|\tilde{s})$ and $q(a) = \tilde{\pi}(a|\tilde{s}) = \pi(a|s)$ for readability:
\begin{align}
  \mathcal{L} &= \sum_{\tilde{s}} d^{\tilde{\pi}}(\tilde{s})\sum_a p(a)\left[\widetilde{A}^{\tilde{\pi}}(\tilde{s}, a) - \alpha\log\frac{p(a)}{q(a)}\right] + \alpha\varepsilon + \sum_{\tilde{s}}\zeta(\tilde{s})\left(1 - \sum_a p(a)\right). \label{eq:lagrangian_expanded}
\end{align}

\medskip
\noindent\textbf{Part 2: Pointwise optimisation.}

Since the objective and constraints decompose over states (the KL decomposes as $\Expect_{\tilde{s}}[\KL(\tilde{\pi}'(\cdot|\tilde{s}) \| \tilde{\pi}(\cdot|\tilde{s}))]$ and the simplex constraints are per-state), we can solve for $\tilde{\pi}'(\cdot|\tilde{s})$ independently at each $\tilde{s}$.

Fix $\tilde{s}$ and differentiate $\mathcal{L}$ with respect to $p(a) = \tilde{\pi}'(a|\tilde{s})$:
\begin{align}
  \frac{\partial\mathcal{L}}{\partial p(a)}
  &= d^{\tilde{\pi}}(\tilde{s})\left[\widetilde{A}^{\tilde{\pi}}(\tilde{s}, a) - \alpha\left(\log\frac{p(a)}{q(a)} + 1\right)\right] - \zeta(\tilde{s}). \label{eq:derivative}
\end{align}
Setting this to zero:
\begin{align}
  \widetilde{A}^{\tilde{\pi}}(\tilde{s}, a) - \alpha\log\frac{p(a)}{q(a)} - \alpha &= \frac{\zeta(\tilde{s})}{d^{\tilde{\pi}}(\tilde{s})}. \label{eq:stationary}
\end{align}
Solving for $\log p(a)$:
\begin{align}
  \log p(a)
  &= \log q(a) + \frac{1}{\alpha}\widetilde{A}^{\tilde{\pi}}(\tilde{s}, a) - 1 - \frac{\zeta(\tilde{s})}{\alpha\, d^{\tilde{\pi}}(\tilde{s})}. \label{eq:log_p_app}
\end{align}
Denoting $\beta = \alpha$ and collecting the terms that do not depend on $a$ into a single constant:
\begin{equation}\label{eq:c}
  c(\tilde{s}) \;:=\; -1 - \frac{\zeta(\tilde{s})}{\beta\, d^{\tilde{\pi}}(\tilde{s})},
\end{equation}
we get:
\begin{equation}\label{eq:log_solution_app}
  \log\tilde{\pi}'(a|\tilde{s}) = \log\pi(a|s) + \frac{1}{\beta}\widetilde{A}^{\tilde{\pi}}(\tilde{s}, a) + c(\tilde{s}).
\end{equation}
Exponentiating:
\begin{equation}\label{eq:unnorm_app}
  \tilde{\pi}'(a|\tilde{s}) = \pi(a|s)\cdot\exp\!\left(\frac{1}{\beta}\widetilde{A}^{\tilde{\pi}}(\tilde{s}, a)\right)\cdot\exp\!\big(c(\tilde{s})\big).
\end{equation}

\medskip
\noindent\textbf{Part 3: Normalisation and replacing advantage with Q-function.}

The constant $c(\tilde{s})$ is determined by the normalisation constraint $\sum_a \tilde{\pi}'(a|\tilde{s}) = 1$.
Summing~\eqref{eq:unnorm_app} over $a$:
\begin{equation}\label{eq:norm_app}
  1 = \exp\!\big(c(\tilde{s})\big)\sum_a \pi(a|s)\,\exp\!\left(\frac{1}{\beta}\widetilde{A}^{\tilde{\pi}}(\tilde{s}, a)\right),
\end{equation}
so:
\begin{equation}\label{eq:c_value_app}
  \exp\!\big(c(\tilde{s})\big) = \frac{1}{\sum_a \pi(a|s)\,\exp\!\Big(\frac{1}{\beta}\widetilde{A}^{\tilde{\pi}}(\tilde{s}, a)\Big)}.
\end{equation}
Substituting back into~\eqref{eq:unnorm_app}:
\begin{equation}\label{eq:solution_adv_app}
  \tilde{\pi}'(a|\tilde{s})
  = \frac{\pi(a|s)\,\exp\!\Big(\frac{1}{\beta}\widetilde{A}^{\tilde{\pi}}(\tilde{s}, a)\Big)}{\sum_{a'}\pi(a'|s)\,\exp\!\Big(\frac{1}{\beta}\widetilde{A}^{\tilde{\pi}}(\tilde{s}, a')\Big)}.
\end{equation}

Now we replace the advantage $\widetilde{A}^{\tilde{\pi}}$ with the Q-function $\widetilde{Q}^{\tilde{\pi}}$.
Since $\widetilde{A}^{\tilde{\pi}}(\tilde{s}, a) = \widetilde{Q}^{\tilde{\pi}}(\tilde{s}, a) - \widetilde{V}^{\tilde{\pi}}(\tilde{s})$, substituting into~\eqref{eq:solution_adv_app}:
\begin{align}
  \tilde{\pi}'(a|\tilde{s})
  &= \frac{\pi(a|s)\,\exp\!\Big(\frac{1}{\beta}\big[\widetilde{Q}^{\tilde{\pi}}(\tilde{s}, a) - \widetilde{V}^{\tilde{\pi}}(\tilde{s})\big]\Big)}{\sum_{a'}\pi(a'|s)\,\exp\!\Big(\frac{1}{\beta}\big[\widetilde{Q}^{\tilde{\pi}}(\tilde{s}, a') - \widetilde{V}^{\tilde{\pi}}(\tilde{s})\big]\Big)} \nonumber\\[6pt]
  &= \frac{\pi(a|s)\,\exp\!\Big(\frac{1}{\beta}\widetilde{Q}^{\tilde{\pi}}(\tilde{s}, a)\Big)\cdot\exp\!\Big(-\frac{1}{\beta}\widetilde{V}^{\tilde{\pi}}(\tilde{s})\Big)}{\sum_{a'}\pi(a'|s)\,\exp\!\Big(\frac{1}{\beta}\widetilde{Q}^{\tilde{\pi}}(\tilde{s}, a')\Big)\cdot\exp\!\Big(-\frac{1}{\beta}\widetilde{V}^{\tilde{\pi}}(\tilde{s})\Big)} \nonumber\\[6pt]
  &= \frac{\pi(a|s)\,\exp\!\Big(\frac{1}{\beta}\widetilde{Q}^{\tilde{\pi}}(\tilde{s}, a)\Big)}{\sum_{a'}\pi(a'|s)\,\exp\!\Big(\frac{1}{\beta}\widetilde{Q}^{\tilde{\pi}}(\tilde{s}, a')\Big)}, \label{eq:solution_Q_app}
\end{align}
where the last step uses the fact that $\exp\!\big({-}\frac{1}{\beta}\widetilde{V}^{\tilde{\pi}}(\tilde{s})\big)$ is independent of $a$ and cancels between numerator and denominator. Writing $\tilde{s} = (s, \mathbf{R})$ gives exactly~\ref{eq:shaped_policy_app}.

\medskip
\noindent\textbf{Part 4: Uniqueness and KKT verification.}

\emph{Strict concavity.}
The objective is linear in $\tilde{\pi}'$, and the KL divergence $\KL(\tilde{\pi}'\|\tilde{\pi})$ is strictly convex in $\tilde{\pi}'$ (since $x\log x$ is strictly convex).
Therefore, the feasible set $\{\tilde{\pi}' : \KL \le \varepsilon,\; \tilde{\pi}' \in \Delta(\Ac)\}$ is a convex set, and the Lagrangian objective $\mathcal{L}$ is strictly concave in $\tilde{\pi}'$ for any $\alpha > 0$.
This guarantees the solution is unique.

\emph{Non-negativity.}
Since $\pi(a|s) > 0$ for $a \in \mathrm{supp}(\pi)$ and $\exp(\cdot) > 0$, the solution~\eqref{eq:solution_Q_app} satisfies $\tilde{\pi}'(a|\tilde{s}) > 0$ for all $a \in \mathrm{supp}(\pi)$ and $\tilde{\pi}'(a|\tilde{s}) = 0$ for $a \notin \mathrm{supp}(\pi)$.
The non-negativity constraints are automatically satisfied.

\emph{KKT conditions.}
The stationarity condition is satisfied by construction (Part~2).
The primal feasibility ($\sum_a \tilde{\pi}' = 1$, $\KL \le \varepsilon$) is ensured by normalisation and by choosing $\alpha$ to satisfy the KL constraint.
Dual feasibility ($\alpha \ge 0$) holds by assumption.
Complementary slackness ($\alpha(\Expect[\KL] - \varepsilon) = 0$) determines $\alpha$: either $\alpha = 0$ (the unconstrained optimum already satisfies $\KL \le \varepsilon$) or the KL constraint binds (equality holds) and $\alpha > 0$.
In practice, $\alpha > 0$ and the constraint is active.

\emph{Determining $\beta$.}
The parameter $\beta = \alpha > 0$ is implicitly defined by the constraint:
\begin{equation}\label{eq:beta_constraint_app}
  \Expect_{\tilde{s} \sim d^{\tilde{\pi}}}\!\left[\KL\!\big(\tilde{\pi}'_\beta(\cdot|\tilde{s}) \,\big\|\, \tilde{\pi}(\cdot|\tilde{s})\big)\right] = \varepsilon,
\end{equation}
where $\tilde{\pi}'_\beta$ is the solution~\eqref{eq:solution_Q_app} parameterised by $\beta$.
The left-hand side is a continuous, strictly monotonically decreasing function of $\beta$: as $\beta \to \infty$, $\tilde{\pi}' \to \tilde{\pi}$ and $\KL \to 0$; as $\beta \to 0^+$, $\tilde{\pi}'$ concentrates on the welfare-maximising action and $\KL \to \max$.
By the intermediate value theorem, a unique $\beta > 0$ satisfying~\eqref{eq:beta_constraint_app} exists for any $\varepsilon \in (0, \KL_{\max})$.
\end{proof}

The shaped policy~\eqref{eq:shaped_policy_app} has three structural properties that make it well-suited for inference-time alignment:

\begin{enumerate}[leftmargin=1.5em, label=(\roman*)]
  \item \textbf{Support preservation.}
  Since $\exp(\cdot) > 0$, the shaped policy assigns positive probability to exactly those actions in the support of $\pi$: no actions are introduced or eliminated.

  \item \textbf{Smooth interpolation.}
  The parameter $\lambda := 1/\beta$ controls alignment strength. As $\lambda \to 0$, we recover the base policy $\pi$; as $\lambda \to \infty$, $\pi'$ concentrates on the welfare-maximising action within the support of $\pi$. The intermediate regime provides a continuum between performance and fairness.

  \item \textbf{Log-space decomposition.}
  The log-policy decomposes additively:
  \begin{equation}\label{eq:log_decomp_app}
    \log \pi'(a \mid s, \bm{R})
    \;=\;
    \underbrace{\log \pi(a \mid s)}_{\text{base competence}}
    \;+\;
    \underbrace{\lambda\,\widetilde{Q}^{\tilde{\pi}}\big((s,\bm{R}), a\big)}_{\text{welfare correction}}
    \;-\; \log Z(s, \bm{R}),
  \end{equation}
  where $Z$ is the normalisation constant. The welfare Q-function acts as a \emph{logit correction} that tilts the base distribution toward equitable actions.
\end{enumerate}

\section{Algorithms}
\label{app:algos}

\begin{algorithm}[H]
\caption{\textsc{QFair: Offline Welfare Critic Training}}
\label{alg:train}
\begin{algorithmic}[1]
\Require Base policy $\pi$, environment $\mathcal{M}$, welfare $\phi_{\bm{w}}$, exploration $\beta_{\mathrm{explore}}$, episodes $N$, epochs $E$, learning rate $\eta$
\Ensure Trained welfare critic $Q_\phi$
\State Initialise $Q_\phi$ with random parameters $\phi$
\State $\sD \gets \emptyset$ \Comment{Replay buffer for augmented transitions}
\For{episode $= 1, \ldots, N$}
    \State $s_0 \gets \mathcal{M}.\mathrm{reset}()$;\quad $\bm{R}_0 \gets \bm{0}$
    \For{$t = 0, 1, \ldots, T-1$}
        \State $a_t \sim \mu(\cdot \mid s_t)$ \Comment{Behaviour policy}
        \State $s_{t+1}, \bm{r}_t, d_t \gets \mathcal{M}.\mathrm{step}(a_t)$ \Comment{$\bm{r}_t \in \R^\nO$}
        \State $\bm{R}_{t+1} \gets \bm{R}_t + \gamma^t\,\bm{r}_t$
        \State $\tilde{r}_t \gets \phi_{\bm{w}}(\bm{R}_{t+1}) - \phi_{\bm{w}}(\bm{R}_t)$ \Comment{Marginal welfare~\eqref{eq:marginal_reward_app}}
        \State $\sD \gets \sD \cup \{((s_t, \bm{R}_t),\; a_t,\; \tilde{r}_t,\; (s_{t+1}, \bm{R}_{t+1}),\; d_t)\}$
    \EndFor
\EndFor
\For{epoch $= 1, \ldots, E$}
    \For{mini-batch $\mathcal{B} \subset \sD$}
        \State $y_i \gets \tilde{r}_i + \gamma(1-d_i)\sum_{a'}\mu(a'|s'_i)\,Q_\phi(\tilde{s}'_i, a')$ for each $i \in \mathcal{B}$ \Comment{Expected SARSA~\eqref{eq:td_target}}
        \State $\phi \gets \phi - \eta\,\nabla_\phi \tfrac{1}{|\mathcal{B}|}\sum_{i \in \mathcal{B}} (Q_\phi(\tilde{s}_i, a_i) - y_i)^2$
    \EndFor
\EndFor
\end{algorithmic}
\end{algorithm}

\begin{algorithm}[H]
\caption{\textsc{Inference-Time Welfare-Aligned Action Selection}}
\label{alg:infer}
\begin{algorithmic}[1]
\Require Frozen base policy $\pi$, frozen critic $Q_\phi$, alignment strength $\lambda$
\State $s_0 \gets \mathcal{M}.\mathrm{reset}()$;\quad $\bm{R}_0 \gets \bm{0}$
\For{$t = 0, 1, \ldots$ \textbf{until} episode ends}
    \State $f_{\mathrm{base}}(a) \gets$ base policy signal for all $a \in \Ac$ 
    \State $f_{\mathrm{fair}}(a) \gets \bar{Q}_\phi\big((s_t, \bm{R}_t),\, a\big)$ for all $a \in \Ac$ \Comment{Normalised QFair scores}
    \State $\pi'(\cdot) \gets \mathrm{softmax}\big(f_{\mathrm{base}} + \lambda\, f_{\mathrm{fair}}\big)$ \Comment{Shaped policy~\eqref{eq:combined}}
    \State Select $a_t \gets \argmax_a \pi'(a)$ or sample $a_t \sim \pi'$ \Comment{Greedy or stochastic}
    \State $s_{t+1}, \bm{r}_t, d_t \gets \mathcal{M}.\mathrm{step}(a_t)$
    \State $\bm{R}_{t+1} \gets \bm{R}_t + \gamma^t\,\bm{r}_t$ \Comment{Update accumulator}
\EndFor
\end{algorithmic}
\end{algorithm}

\section{Additional Expriments}
\label{app:additional_exp}
In this section, we present the additional experimental results that were not included in the main paper. As mentioned in Section~\ref{sec:experiments}, we present the complete environmental details and empirical results for the four-room domain here.

\subsection{Four Room}
We first provide the details of the four-room environment, which models a multi-objective navigation and resource-collection problem under spatial constraints and stochastic outcomes. The environment is a $13 \times 13$ maze divided by walls into four connected regions, with multiple spawn locations and three resource types distributed across the map. These resources are represented by distinct shapes and colors: a blue square, a red circle, and a green triangle. In this domain, the state space contains agent’s current grid position together with local binary indicators that signal whether the current cell yields a red, green, or blue resource. The action space includes the four cardinal movements (up, down, left, right), where the movement is constrained by walls and map boundaries, so policy quality depends not only on objective prioritization but also on efficient navigation through the maze. The reward is a vector consisting of $\nO = 3$, which corresponds to the collection of each resource type. The resource dynamics are intentionally heterogeneous: green and blue provide stable unit rewards upon collection, whereas red yields stochastic rewards (success or failure), introducing uncertainty and making balanced collection more challenging. In this environment, fairness is interpreted as avoiding collapse to a single easily obtainable objective and instead maintaining balanced performance across all three reward dimensions. During training, agents optimize the scalar sum of objective rewards. At inference time, we decompose this scalar signal into its three components and apply policy shaping toward the GGF objective as described in~\Cref{sec:method}.

\Cref{tab:fr_results} shows the total reward, CV, minimum reward, and maximum reward during the evaluation phase. The results show that although our inference-time shaped methods achieve total rewards comparable to the original RL algorithms, they consistently produce lower CV values than their corresponding base policies, indicating more balanced objective outcomes. As expected, the GGF-based methods, such as GGF-PPO and GGF-A2C, achieve even lower CV values, since they are explicitly optimized for fairness during training. In contrast, our approach performs fairness alignment purely at inference time without retraining the policy. Interestingly, our shaped PPO achieves the highest total reward among all methods, suggesting that optimizing directly for the GGF objective can sometimes lead to overly conservative solutions with reduced overall reward. On the other hand, our inference-time shaping approach maintains high task performance while improving balance across objectives. 

\begin{table}[h]
\centering
\caption{Performance comparison across algorithms. 
We report total reward, coefficient of variation (CV), minimum objective return, and maximum objective return.}
\label{tab:fr_results}
\begin{tabular}{lcccc}
\toprule
Algorithm & Total & CV & Min reward & Max reward \\
\midrule
PPO        & 289.97 $\pm$ 4.65  & 0.8731 $\pm$ 0.1829 & 0.03 $\pm$ 0.04 & 262.56 $\pm$ 43.70 \\
Shaped PPO & 296.49 $\pm$ 0.54  & 0.6394 $\pm$ 0.1331 & 0.22 $\pm$ 0.21 & 208.13 $\pm$ 42.04 \\
GGF-PPO    & 280.14 $\pm$ 6.21  & 0.2939 $\pm$ 0.1203 & 57.05 $\pm$ 14.43 & 145.96 $\pm$ 23.71 \\
\midrule
A2C        & 281.22 $\pm$ 10.41 & 0.6578 $\pm$ 0.1342 & 0.18 $\pm$ 0.14 & 200.50 $\pm$ 41.14 \\
Shaped A2C & 269.15 $\pm$ 40.69 & 0.5654 $\pm$ 0.0795 & 0.33 $\pm$ 0.15 & 169.89 $\pm$ 44.81 \\
GGF-A2C    & 257.21 $\pm$ 12.47 & 0.5095 $\pm$ 0.0860 & 2.79 $\pm$ 10.74 & 146.70 $\pm$ 21.19 \\
\midrule
DQN        & 286.42 $\pm$ 1.58  & 0.7503 $\pm$ 0.1527 & 0.31 $\pm$ 0.10 & 231.21 $\pm$ 38.21 \\
Shaped DQN & 287.43 $\pm$ 11.93 & 0.5844 $\pm$ 0.1094 & 0.25 $\pm$ 0.10 & 184.06 $\pm$ 32.33 \\
GGF-DQN    & 285.77 $\pm$ 1.93  & 0.6730 $\pm$ 0.1180 & 0.24 $\pm$ 0.17 & 213.89 $\pm$ 29.08 \\
\bottomrule
\end{tabular}
\end{table}

\begin{figure*}[h]
    \centering
    \begin{subfigure}[t]{0.49\linewidth}
        \centering
	     \includegraphics[width=\linewidth]{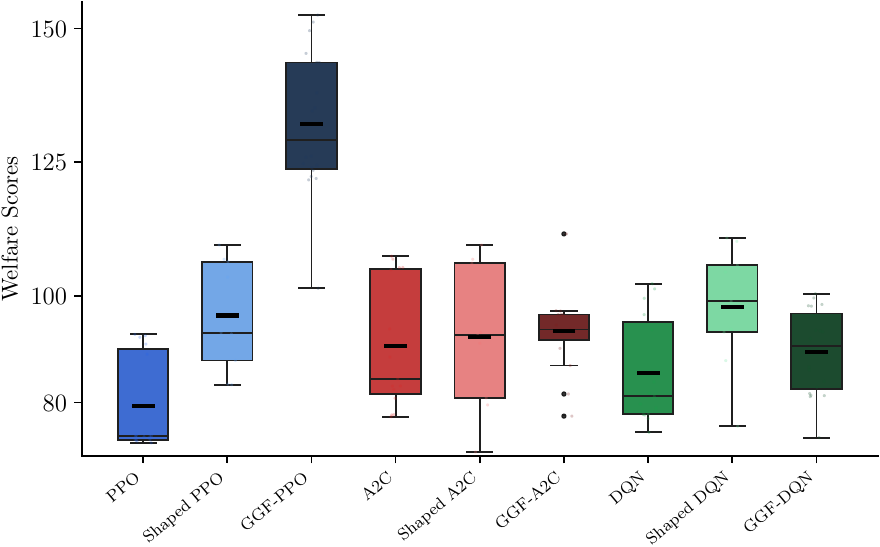}
	      \caption{Welfare scores.}
	      \label{fig:fr_box}
	\end{subfigure}
	\begin{subfigure}[t]{0.49\linewidth}
	    \centering
         \includegraphics[width=\linewidth]{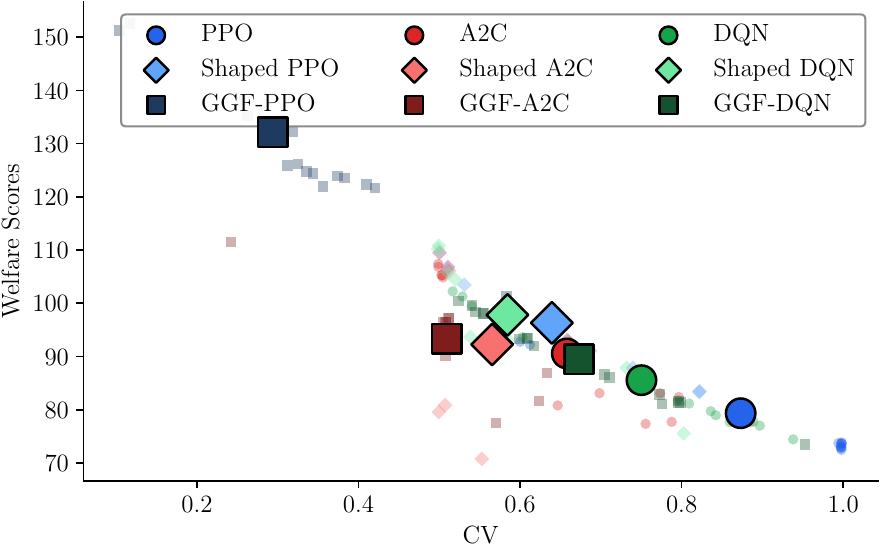}
        \caption{Pareto front plot.}
        \label{fig:fr_pareto}
    \end{subfigure}
    \caption{Performance comparison of PPO, A2C, DQN, their inference-time shaped variants, and GGF-trained counterparts in the Four Room.}
    \label{fig:fr}
\end{figure*}

To further evaluate fairness outcomes, we compute welfare scores using the GGF function applied to the empirical average return vectors. As shown in~\Cref{fig:fr_box}, the inference-time shaped variants consistently achieve higher welfare scores than their standard counterparts. In some cases, such as DQN, the shaped variant even attains higher welfare scores than the GGF-trained model. This result highlights the effectiveness of inference-time alignment for improving fairness without modifying the training objective. We further validate this by showing our results in the Pareto analysis shown in~\Cref{fig:fr_pareto}. These results demonstrate that our inference-time shaping methods shift policies toward the desirable region of lower inequality and higher welfare. While GGF-PPO achieves the strongest fairness performance overall, it is explicitly trained to optimize the GGF objective. On the other hand, our proposed methods consistently improve fairness relative to the original policies.

\subsection{Harvest Regrow}
We also present welfare scores and Pareto front plot of our methods against standard deep RL methods and their GGF counterparts in the harvest regrow environment. \Cref{fig:hr_box} shows welfare values computed from 100 evaluation trajectories per trained agent. Once again, the results show that inference-time shaped variants consistently achieve higher welfare than their standard counterparts and, in some cases, are comparable to GGF-trained policies. Interestingly, Shaped-A2C achieves higher welfare scores than GGF-A2C, even though GGF-A2C is trained to explicitly optimize the GGF objective. As expected, GGF-PPO and GGF-DQN achieve higher welfare scores than their standard and inference-time counterparts. However, our proposed inference-time methods achieve comparable welfare scores to GGF methods while maintaining lower inequality. This is further validated by the Pareto front plot (see~\Cref{fig:hr_pareto}). 
These results suggest that post-training alignment can effectively recover fairness without requiring retraining under a specific fairness objective.

\begin{figure*}[h]
    \centering
    \begin{subfigure}[t]{0.49\linewidth}
        \centering
	     \includegraphics[width=\linewidth]{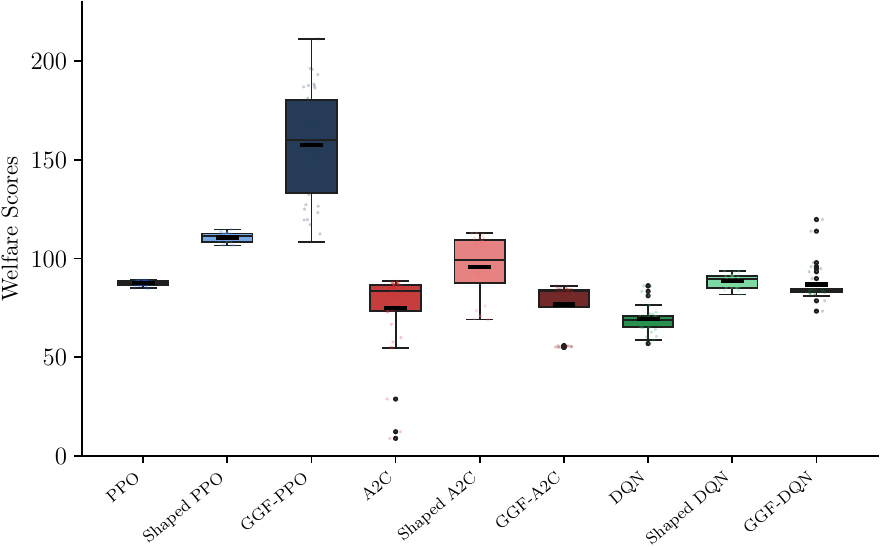}
	      \caption{Welfare scores.}
	      \label{fig:hr_box}
	\end{subfigure}
	\begin{subfigure}[t]{0.49\linewidth}
	    \centering
         \includegraphics[width=\linewidth]{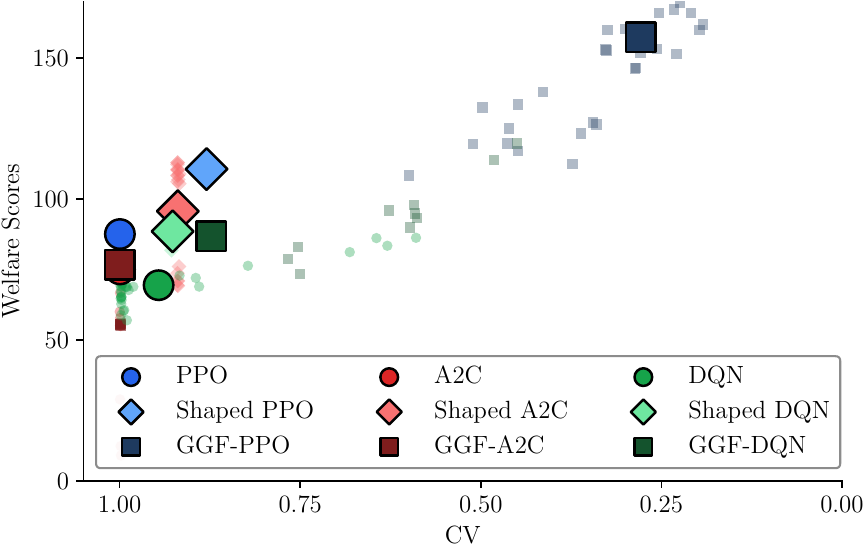}
        \caption{Pareto front plot.}
        \label{fig:hr_pareto}
    \end{subfigure}
    \caption{Performance comparison of PPO, A2C, DQN, their inference-time shaped variants, and GGF-trained counterparts in the Harvest Regrow.}
    \label{fig:hr}
\end{figure*}

\section{Hyperparameters}
\label{app:hyper}

In this section, we detail the hyperparameters and training configurations used across all experiments. For the base RL agents, we utilized the implementations provided by the Stable Baselines3 library~\footnote{https://github.com/DLR-RM/stable-baselines3}. To ensure robust performance and a fair baseline comparison, we tuned the hyperparameters for these agents using the Optuna optimization framework. For the multi-objective baselines that optimize the welfare function directly during training (i.e., GGF-PPO, GGF-A2C, and GGF-DQN), we adopted the author-provided hyperparameters detailed in~\citep{SiddiqueWengZimmer20}, which originally proposed these algorithms. For our inference-time shaped agents (Shaped PPO, Shaped A2C, Shaped DQN), the parameters of the base policies remain completely frozen. The hyperparameters listed for the Shaped agents therefore correspond to the offline training of the QFair critic and the inference-time temperature scaling ($\lambda$). The tables below summarize the hyperparameter values. To denote environment-specific tuning, we use the following subscripts: $sc$ for the species conservation environment, $fr$ for the four-room environment, and $hr$ for the harvest regrow crops environment.

\begin{table}[h]
\centering
\caption{Hyperparameters for PPO and Shaped PPO}
\label{tab:ppo_hyperparams}
\begin{tabular}{lcc}
\toprule
\textbf{Hyperparameter} & \textbf{PPO (Base)} & \textbf{Shaped PPO} \\
\midrule
Discount factor ($\gamma$) & $0.99_{sc, fr}, 0.95_{hr}$ & Base Frozen \\
Learning rate & $3 \times 10^{-4}_{sc, fr, hr}$ & $1 \times 10^{-3}_{sc, fr, hr}$ (QFair) \\
Optimizer & Adam$_{sc, fr, hr}$ & Adam$_{sc, fr, hr}$ (QFair) \\
Network size & [128, 128]$_{sc, fr, hr}$ (Actor/Critic) & [256, 256, 256]$_{sc, fr, hr}$ (QFair) \\
Batch size & $64_{sc, fr}, 128_{hr}$ & $256_{sc, fr, hr}$ (QFair) \\
n\_steps & $2048_{sc, fr, hr}$ & N/A \\
Clip range & $0.2_{sc, fr, hr}$ & N/A \\
Entropy coefficient & $0.0_{sc, fr}, 0.01_{hr}$ & N/A \\
Value function coefficient & $0.5_{sc, fr, hr}$ & N/A \\
Training timesteps & $1 \times 10^6_{sc, fr, hr}$ & $5 \times 10^5_{sc, fr, hr}$ (QFair) \\
\bottomrule
\end{tabular}
\end{table}

\begin{table}[h]
\centering
\caption{Hyperparameters for A2C and Shaped A2C}
\label{tab:a2c_hyperparams}
\begin{tabular}{lcc}
\toprule
\textbf{Hyperparameter} & \textbf{A2C (Base)} & \textbf{Shaped A2C} \\
\midrule
Discount factor ($\gamma$) & $0.99_{sc, fr}, 0.95_{hr}$ & Base Frozen \\
Learning rate & $7 \times 10^{-4}_{sc, fr, hr}$ & $1 \times 10^{-3}_{sc, fr, hr}$ (QFair) \\
Optimizer & RMSprop$_{sc, fr, hr}$ & Adam$_{sc, fr, hr}$ (QFair) \\
Network size & [128, 128]$_{sc, fr, hr}$ (Actor/Critic) & [256, 256, 256]$_{sc, fr, hr}$ (QFair) \\
n\_steps & $5_{sc, fr, hr}$ & N/A \\
Entropy coefficient & $0.0_{sc}, 0.01_{fr, hr}$ & N/A \\
Value function coefficient & $0.5_{sc, fr, hr}$ & N/A \\
Training timesteps & $1 \times 10^6_{sc, fr, hr}$ & $5 \times 10^5_{sc, fr, hr}$ (QFair) \\
\bottomrule
\end{tabular}
\end{table}

\begin{table}[h]
\centering
\caption{Hyperparameters for DQN and Shaped DQN}
\label{tab:dqn_hyperparams}
\begin{tabular}{lcc}
\toprule
\textbf{Hyperparameter} & \textbf{DQN (Base)} & \textbf{Shaped DQN} \\
\midrule
Discount factor ($\gamma$) & $0.99_{sc, fr}, 0.95_{hr}$ & Base Frozen \\
Learning rate & $1 \times 10^{-4}_{sc, fr, hr}$ & $1 \times 10^{-3}_{sc, fr, hr}$ (QFair) \\
Optimizer & Adam$_{sc, fr, hr}$ & Adam$_{sc, fr, hr}$ (QFair) \\
Network size & [64, 64]$_{sc, fr, hr}$ (Q-Net) & [256, 256, 256]$_{sc, fr, hr}$ (QFair) \\
Buffer size & $1 \times 10^5_{sc, fr, hr}$ & $1 \times 10^5_{sc, fr, hr}$ (QFair) \\
Exploration fraction & $0.1_{sc, fr, hr}$ & $\beta_{\text{explore}} = 0.1_{sc, fr, hr}$ \\
Double DQN & True$_{sc, fr, hr}$ & N/A \\
Dueling DQN & True$_{sc, fr, hr}$ & N/A \\
Training timesteps & $1 \times 10^6_{sc, fr, hr}$ & $5 \times 10^5_{sc, fr, hr}$ (QFair) \\
\bottomrule
\end{tabular}
\end{table}

preserving strong task performance.

\end{document}